\newlist{parenum}{enumerate}{1}
\setlist[parenum,1]{label=\textbf{(\arabic*)}, wide, labelwidth=!, labelindent=0pt, topsep=0pt, itemsep=-1ex, partopsep=0ex, parsep=1ex}
\newcommand{\wcm}[1]{\textbf{#1}}
\newcommand{\scm}[1]{\textbf{#1}\quad}
\newtheorem{assumption}{Assumption}
\def\eqref#1{equation~\ref{#1}}
\def\1{\bm{1}}
\def\va{{\bm{a}}}
\DeclareMathAlphabet{\mathsfit}{\encodingdefault}{\sfdefault}{m}{sl}
\SetMathAlphabet{\mathsfit}{bold}{\encodingdefault}{\sfdefault}{bx}{n}
\def\sR{{\mathbb{R}}}
\def\sZ{{\mathbb{Z}}}
\newcommand{\diag}[1]{\mathsf{diag}(#1)}
\newcommand{\wt}[1]{\widetilde{#1}}
\newcommand{\tran}{^{\mkern-1.5mu\mathsf{T}}}
\newcommand{\vones}{\mathbbm{1}}
\newcommand{\na}{^\mathtt{a}}
\newcommand{\lc}{^\mathtt{c}}
\newcommand{\egv}{\mathbf{u}}
\newcommand{\cv}{\mathbf{v}}
\newcommand{\concat}{\bigparallel}
\newcommand{\nonlinear}{\sigma}
\newcommand{\fC}{\mathfrak{C}}
\newcommand{\itand}{~~{\footnotesize\textsf{and}}~~}
\renewcommand{\emph}[1]{\textit{#1}}
\appto\TPTnoteSettings{\footnotesize}
\newtheoremstyle{break}
  {}
  {}
  {\itshape}
  {}
  {\bfseries}
  {}
  {\newline}
  {}
\newtheorem{theorem}{Theorem}
\newtheorem{proposition}[theorem]{Proposition}
\newtheorem{lemma}[theorem]{Lemma}
\newenvironment{prevproof}[1]{\noindent {\em {Proof of \cref{#1}:}}}{\hfill $\square$\vskip \belowdisplayskip}
\providecommand{\customgenericname}{}
\newcommand{\newcustomtheorem}[2]{%
  \newenvironment{#1}[1]
  {%
   \renewcommand\customgenericname{#2}%
   \renewcommand\theinnercustomgeneric{##1}%
   \innercustomgeneric
  }
  {\endinnercustomgeneric}
}
\pgfplotsset{compat=1.18}
\renewcommand{\emph}[1]{\textit{#1}}
\title{\huge Spectral Greedy Coresets for Graph Neural Networks}
\author{
    Mucong Ding\thanks{University of Maryland, College Park; e-mail: {\tt mcding@umd.edu}}
    \and
    Yinhan He\thanks{University of Virginia, Charlottesville; e-mail: {\tt nee7ne@virginia.edu}}
    \and
    Jundong Li\thanks{University of Virginia, Charlottesville}  
    \and
    Furong Huang\thanks{University of Maryland, College Park}
}
\begin{document}
\date{}
\maketitle

\begin{abstract}
The ubiquity of large-scale graphs in node-classification tasks significantly hinders the real-world applications of Graph Neural Networks (GNNs).
Node sampling, graph coarsening, and dataset condensation are effective strategies for enhancing data efficiency. However, owing to the interdependence of graph nodes, coreset selection, which selects subsets of the data examples, has not been successfully applied to speed up GNN training on large graphs, warranting special treatment.
This paper studies graph coresets for GNNs and avoids the interdependence issue by selecting ego-graphs (i.e., neighborhood subgraphs around a node) based on their spectral embeddings.
We decompose the coreset selection problem for GNNs into two phases, a coarse selection of widely spread ego graphs and a refined selection to diversify their topologies.
We design a greedy algorithm that approximately optimizes both objectives.
Our spectral greedy graph coreset (SGGC) scales to graphs with millions of nodes, obviates the need for model pre-training, and is applicable to low-homophily graphs.
Extensive experiments on ten datasets demonstrate that SGGC outperforms other coreset methods by a wide margin, generalizes well across GNN architectures, and is much faster than graph condensation.
\end{abstract}

\section{Introduction}
\label{sec:intro}

\emph{Graph neural networks} (GNNs) have achieved significant success in addressing various graph-related tasks, such as node classification and link prediction~\citep{hamilton2020graph}.
Nonetheless, the widespread presence of large-scale graphs in real-world scenarios, including social, informational, and biological networks, presents substantial computational challenges for GNN training, given the frequent scaling of these graphs to millions of nodes and edges.
The cost of training a single model is considerable and escalates further when multiple training iterations are required, for example, to validate architectural designs and hyperparameter selections~\citep{elsken2019neural}.
To tackle the above issues, we adopt a natural \emph{data-efficiency} approach --- simplifying the given graph data appropriately, with the goal of saving training time.
In particular, we ask the following question: \emph{how can we appropriately simplify graphs while preserving the performance of GNNs}?

A simple yet effective solution to simplify a dataset is \emph{coreset selection}, despite other methods such as graph sparsification, graph coarsening, and graph condensation reviewed in the related work~\cref{sec:related}.
Typically, the coreset selection approach~\citep{toneva2018empirical,paul2021deep} finds subsets of data examples that are important for training based on certain heuristic criteria.
The generalization of coreset selection to graph node/edge classification problems is then to find the important ``subsets'' of the given graph, e.g., nodes, edges, and subgraphs.
This challenge arises from graph nodes' interdependence and GNNs' non-linearities.
We focus on node classification in this paper as it is among the important learning tasks on graphs and is still largely overlooked.

\begin{figure*}[!ht]
\centering
\includegraphics[width=\textwidth]{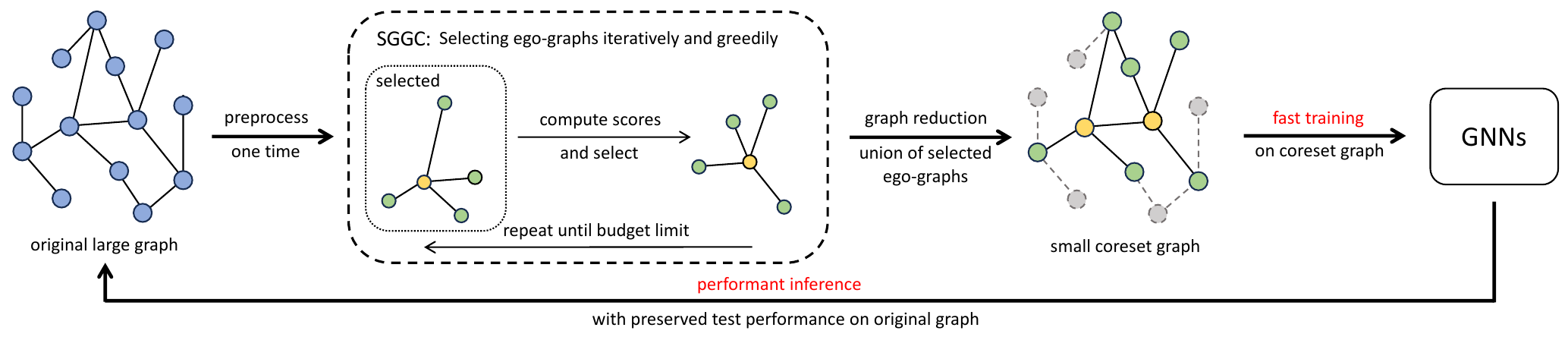}
\caption{Overview of \textit{spectral greedy graph coresets (SGGC)} for efficient GNN training. SGGC processes a large graph to iteratively select ego-graphs. The assembled coreset graph facilitates fast GNN training while maintaining test performance on the original graph.}
\label{fig:diagram}
\end{figure*}

In this paper, we find a new approach to formulate graph coreset selection for GNNs.
It avoids GNN's interdependent nodes and non-linear activation issues by selecting ego-graphs, i.e., the subgraph induced by the neighborhood around a center node, based on their node embeddings in the graph-spectral domain.
Our ego-graph-based coreset is inspired by two observations.
\textbf{(1)} We find that most GNNs applied to large graphs follow the nearest-neighbor message-passing update rule and have ego-graph-like receptive fields.
Thus, by selecting the ego-graphs (which is equivalent to selecting their center nodes), we avoid the problem of finding subsets of nodes and edges independently, which typically leads to complex combinatorial optimization; see~\cref{sec:prelim}.
\textbf{(2)} Moreover, we identify that when expressing the node embeddings in the graph-spectral domain, the non-linear spectral embeddings of GNNs on ego-graphs are ``smooth'' functions of the center node, i.e., nearby nodes are likely to have similar spectral embeddings on their corresponding ego-graphs~\citep{balcilar2021analyzing}, which we will theoretically justify under certain assumptions in~\cref{sec:theory}.

Using \textbf{(1)} and \textbf{(2)}, we propose approximating the GNN's spectral embedding using a subset of ego-graphs.
\emph{To approximate the spectral embedding with fewer ego-graphs} (which one-to-one correspond to their center nodes), one should select center nodes that are far from each other since nearby ones are likely to have similar embeddings, thus, being less informative.
We derive an upper bound on the coreset objective independent of the spectral embedding.
This enables us to find the coreset of center nodes without evaluating the GNN's spectral embeddings on any ego-graph.
With the coreset objective substituted by the upper-bound, we adopt the greedy iterative geodesic ascent (GIGA) approach~\citep{campbell2018bayesian,vahidian2020coresets} to obtain the coresets.

The above procedure of selecting distant ego-graphs is sufficient to approximate the whole graph's spectral embedding well.
However, the selected center nodes do not necessarily approximate the node classification loss well, and the topological information of ego-graphs is not considered.
To approximate the GNN training loss, we propose to refine the coreset selection by filtering out some of the selected ego-graphs whose topologies are not distinctive enough. 
Since the transformation from the spectral to the spatial domain is a linear operation depending on the graph topology, the approximated spatial embeddings of ego-graphs will differ more when they have more distinctive topologies.
Hence, we exclude ego-graphs with non-distinctive spatial embeddings to enhance efficiency.
This is solved by the submodular coreset algorithm~\citep{iyer2021submodular,kothawade2022prism} using greedy submodular maximization~\citep{mirzasoleiman2020coresets}.

As a result, we decompose the ego-graph selection into two stages: a coarse selection of widely spread ego-graphs that approximate the whole graph's spectral embedding (as detailed in \cref{eq:graph-average-coreset}) and a refined selection to approximate the node classification loss with improved sample efficiency (as detailed in \cref{eq:linear-classification-coreset}).
Specifically, the first stage (which is solved via GIGA) \emph{extrapolates} the graph to find distant center nodes over the original graph, and the second stage (which is solved via submodular maximization) \emph{exploits} the candidate center nodes and keeps the most informative ones based on their topologies. 
We call this two-stage algorithm \textit{spectral greedy graph coresets (SGGC)}.
Our SGGC compresses node attributes of selected ego-graphs using low-rank approximations, maintaining efficient storage without compromising GNN performance, as shown in~\cref{sec:experiments}.

A visualization overview of the SGGC approach, from preprocessing a large graph to forming a coreset graph for fast GNN training, is provided in~\cref{fig:diagram}.
SGGC scales to large graphs, needs no pre-training, and performs well on both high- and low-homophily graphs.
SGGC surpasses other coreset methods in our experiments on ten graph datasets.
We show that the combined algorithm is better than applying either algorithm (GIGA or submodular maximization) individually. 
Moreover, SGGC matches graph condensation's performance~\citep{jin2021graph}, but is significantly faster and better generalizes across GNN architectures.

Our \textbf{contributions} are summarized as follows:
\begin{parenum}
\item \textbf{Novel Graph Coreset Selection Method for GNNs:} We propose a novel graph coreset selection method for GNNs, leveraging ego-graphs to address computational challenges in training on large-scale graphs, which simplifies the data while preserving GNN performance.
\item \textbf{Two-Stage Spectral Greedy Graph Coresets Algorithm:} Our approach introduces a two-stage algorithm, SGGC, that efficiently reduces graph size for GNN training by first broadly selecting ego-graphs and then refining this selection based on topological distinctiveness.
\item \textbf{Theoretical Foundation:} We provide a theoretical justification for our method, showing that ego-graph-based coresets can approximate GNN spectral embeddings with smooth functions, enabling a more effective simplification of graphs.
\item \textbf{Empirical Validation Across Diverse Graphs:} SGGC demonstrates superior performance in experiments on ten graph datasets, offering significant improvements in efficiency, scalability, and generalizability over existing coreset and condensation methods.
\end{parenum}
\section{Problem: Graph Coresets for GNNs}
\label{sec:prelim}

\begin{figure*}[htbp!]
    \centering
    \begin{minipage}[t]{0.32\textwidth}
        \centering
        \includegraphics[width=.99\linewidth,trim={5pt 5pt 5pt 5pt},clip]{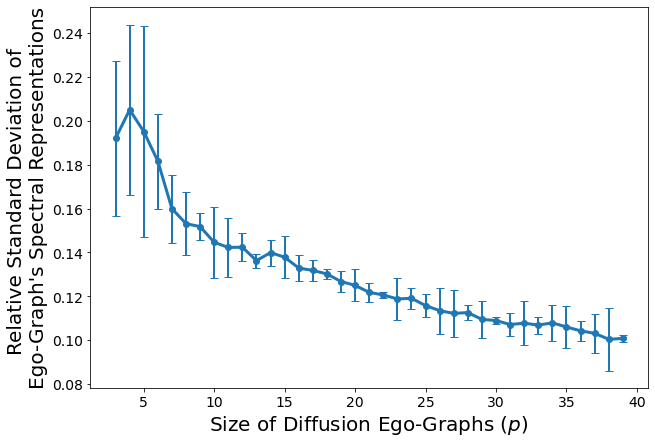}
        \caption{Relative standard deviation of spectral embeddings on ego-graphs $\boldsymbol{Z}_i$ across all the nodes vs. the ego-graph size $p$; see~\cref{assump:bounded-spectral-variance}.}
        \label{fig:spectral-concentration}
    \end{minipage}
    \hfill
    \begin{minipage}[t]{0.32\textwidth}
        \centering
        \includegraphics[width=.90\linewidth,trim={5pt 0pt 5pt 5pt},clip]{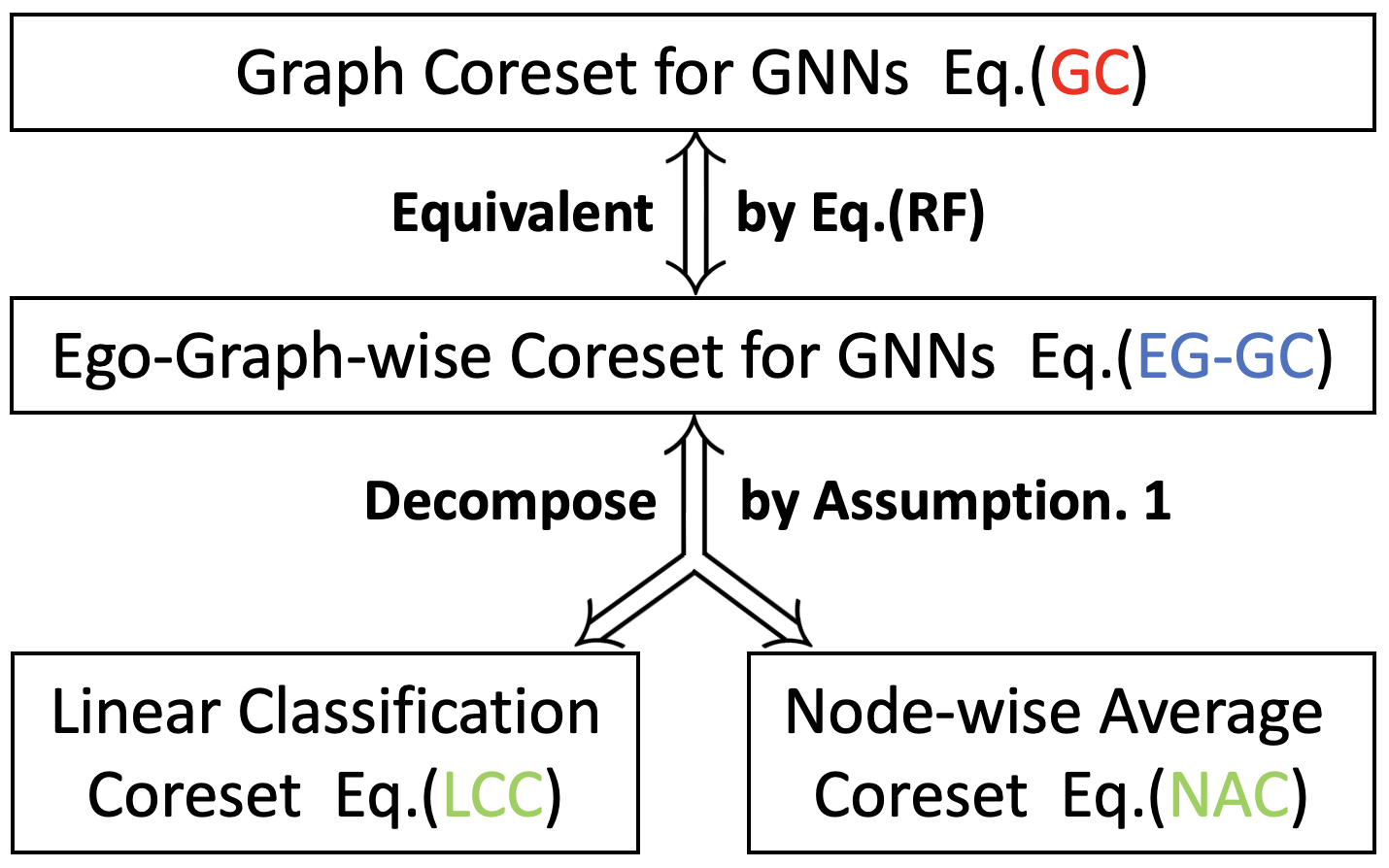}
        \caption{Conceptual diagram showing the theoretical analysis formulating the spectral greedy graph coresets (SGGC).}
        \label{fig:concept}
    \end{minipage}
    \hfill
    \begin{minipage}[t]{0.32\textwidth}
        \centering
        \includegraphics[width=.96\linewidth,trim={5pt 5pt 5pt 5pt},clip]{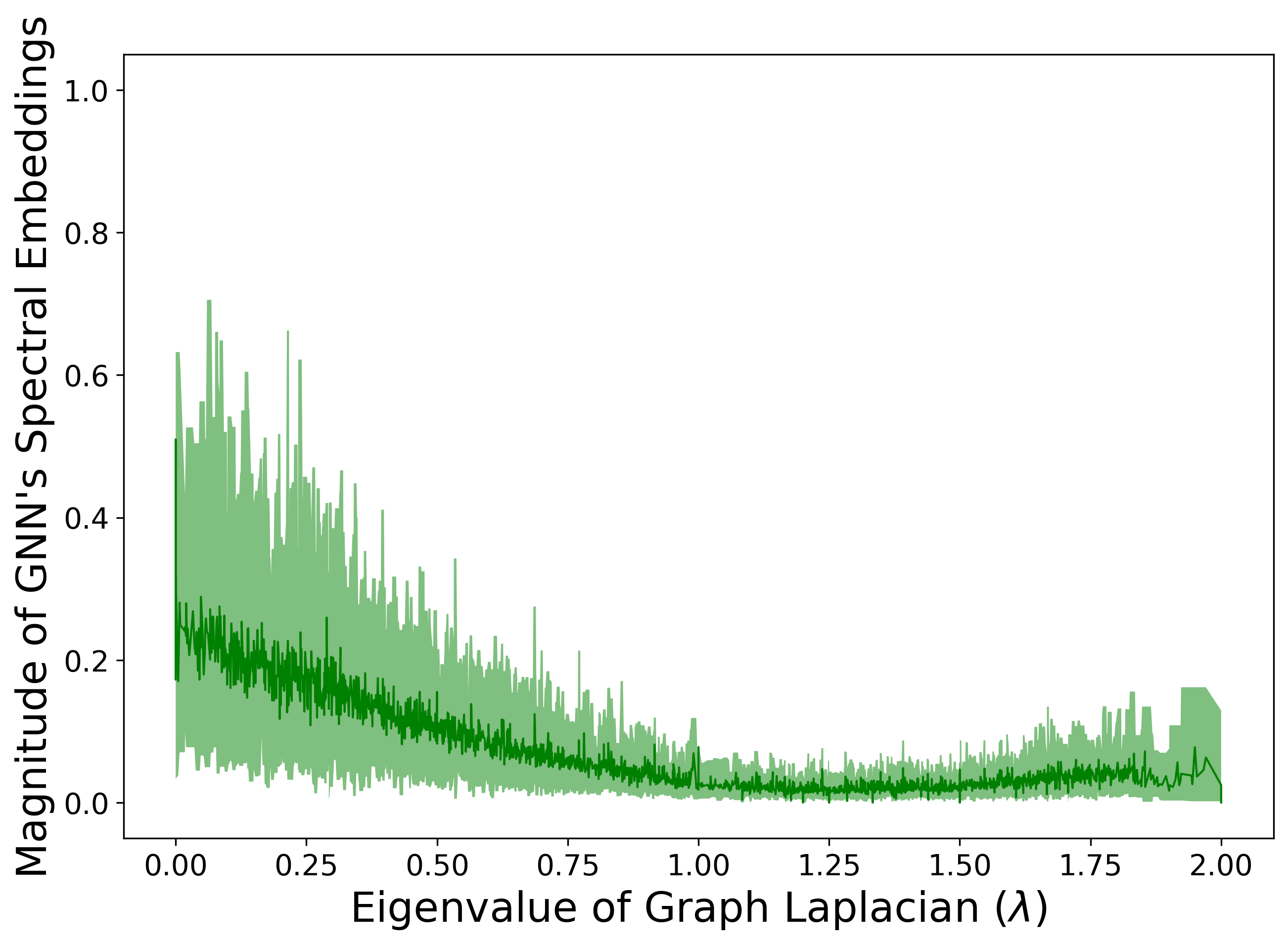}
        \caption{Spectral response of 2-layer GCNs on Cora. The spectral response corresponding to eigenvalue $\lambda_i$ is defined as 
        $\|[U\tran f_\theta(A,X)]_{i,:}\|/\|[U\tran X]_{i,:}\|$.}
        \label{fig:gnn-spectral}
    \end{minipage}
\end{figure*}

We start by defining the downstream task for graph coresets and node classification with graph neural networks.
For a matrix $M$, we denote its $(i,j)$-th entry, $i$-th row, $j$-th column by $M_{i,j}$, $M_{i,:}$, and $M_{:,j}$, respectively.

\wcm{Node classification on a graph} considers that we are given an (undirected) graph $G=(V=[n],~E\subset[n]\times[n])$ with (symmetric) adjacency matrix $A\in\{0,1\}^{n\times n}$, node features $X\in\mathbb{R}^{n\times d}$, node class labels $\mathbf{y}\in[K]^n$, and mutually disjoint node-splits $V_{\text{train}}\bigcup V_{\text{val}}\bigcup V_{\text{test}}=[n]$, where we assume the training split consists of the first $n_t$ nodes, i.e., $V_{\text{train}}=[n_t]$.
Using a \emph{graph neural network} (GNN) $f_{\theta}: \mathbb{R}_{\geq0}^{n\times n}\times \mathbb{R}^{n\times d}\to \mathbb{R}^{n\times K}$, where $\theta\in\Theta$ denotes the parameters, we aim to find $\theta^*=\arg\min_{\theta}\mathcal{L}(\theta)$, where the training loss is $\mathcal{L}(\theta)\coloneqq \frac{1}{n_t}\sum_{i\in [n_t]}\ell([f_{\theta,\lambda}(A,X)]_{i,:},~y_i)$.
Here $Z_i\coloneqq[f_{\theta,\lambda}(A,X)]_{i,:}$ is the output embedding for the $i$-th node.
The node-wise loss function is $\ell(Z_i,y_i)\coloneqq\text{CrossEntropy}(\text{softmax}(Z_i), y_i)$. The loss $\mathcal{L}(\theta)$ defined above is under the \emph{transductive} setting, which can be generalized to the \emph{inductive} setting by assuming only $\{A_{ij}\mid i,j\in[n_t]\}$ and $\{X_i\mid i\in[n_t]\}$ are used during training.

\wcm{Graph neural networks} (GNNs) can often be interpreted as iterative convolution over nodes (i.e., \emph{message passing})~\citep{ding2021vq}, where given inputs $X^{(0)}=X$,
\begin{equation}
\label{eq:gnn}
    X^{(l+1)}=\sigma(C_{\alpha^{(l)}}(A)X^{(l)}W^{(l)})\quad\forall l\in[L],
\tag{GNN}
\end{equation}
and $X^{(L)}=f_{\theta}(A,X)$.
Here, $C_{\alpha^{(l)}}(A)$ is the convolution matrix which is (possibly) parametrized by $\alpha^{(l)}$, $W^{(l)}$ is the learnable linear weights, and $\sigma(\cdot)$ denotes the non-linearity. See~\cref{apd:gnns} for more details on GNNs.

\wcm{Graph coresets for GNNs} 
\textbf{Graph coresets for GNNs} seek to select a subset of training nodes $V_w\subset V_{train}=[n_t]$ with size $|V_w|\leq c \ll n_t$ along with a corresponding set of sample weights such that the training loss $\mathcal{L}(\theta)$ is approximated for any $\theta\in\Theta$. Let $w\in\mathbb{R}^{n_t}_{\geq0}$ be the vector of non-negative weights, we require the number of non-zero weights $\|w\|_0\coloneqq\sum_{i\in[n_t]}\vones[w_i>0]\leq c$, and hence the search space is $\mathcal{W}\coloneqq\{w\in\mathbb{R}^{n_{t}}_{\geq0}\mid\|w\|_0\leq c\}$. The objective of graph coreset selection is
\begin{equation}
\label{eq:graph-coreset}
    \min_{w\in\mathcal{W}}\max_{\theta\in\Theta}\Big|\sum_{i\in [n_{t}]}w_i\cdot\ell\big([f_\theta(A,X)]_i,~y_i\big)-\mathcal{L}(\theta)\Big|,
\tag{\textcolor{red}{GC}}
\end{equation}
which minimizes the worst-case error for all $\theta$.
However, the above formulation \emph{provides nearly no data reduction} in terms of its size, since both the graph adjacency matrix $A$ and node feature matrix $X$ are still needed to compute the full-graph embeddings $f_\theta(A,X)$ in the coreset loss in~\cref{eq:graph-coreset}.

Since the goal of graph coresets is not only to reduce the number of labels, but more importantly, the data size, we should formalize how the subsets of $A$ and $X$ are selected in~\cref{eq:graph-coreset} to make the objective practical.
A natural convention, considered by related literature including graph condensation~\citep{jin2021graph}, is further \emph{assuming} that only the node features of the selected nodes, $X_w=\{X_{i,:}\mid i\in V_w\}$, and the subgraph induced by the selected nodes, $A_w=\{A_{i,j}\mid i,j\in V_w\}$ are kept.
Under this convention, the central problem of graph coresets changes to selecting the labeled nodes, as well as their node features and adjacencies, which we call as \wcm{node-wise graph coresets},
\begin{equation}
\label{eq:node-coreset}
    \min_{w\in\mathcal{W}}\max_{\theta\in\Theta}\Big|\sum_{i\in [n_{t}]}w_i\cdot\ell\big([f_\theta(A_w,X_w)]_i,~y_i\big)-\mathcal{L}(\theta)\Big|.
\tag{\textcolor{red}{N-GC}}
\end{equation}
However, since $A_w$ and $X_w$ are complex discrete functions of $w$, the above formulation leads to a complex combinatorial optimization, which we still need to learn how to solve.
Moreover, posing $A_w$ and $X_w$ to be entirely determined by $w$ leads to sub-optimality.
For example, it is likely that there exists another set of sample weights $w'\in\mathcal{W}$ such that using $A_{w'}$ and $X_{w'}$ in~\cref{eq:node-coreset} results in a smaller error.

\scm{Alternative formulation of graph coresets.} The critical difference between~\cref{eq:node-coreset} and a typical coreset problem like~\cref{eq:graph-coreset} is that the node-pair-wise relation encoded in the adjacency matrix $A$ forbids us to select nodes as independent samples.
In this paper, we consider another formation of graph coresets, to avoid the non-independent issue.
The idea is to use the property that most GNNs (especially those applied to large graphs) are ``\emph{local functions}'' on the graph, i.e., the output embedding $Z_i=[f_{\theta,\lambda}(A,X)]_{i,:}$ of the $i$-th node may only depend on the close-by nodes $\{j\in[n]\mid d(i,j)<D\}$ where $d(i,j)$ denotes the shortest-path distance.
Without loss of generality, we consider nearest-neighbor message passing (including GCN, GAT, and GIN), whose convolution/message-passing weight is non-zero $C_{i,j}\neq 0$ if and only if $i=j$ or $A_{i,j}=1$.
More specifically, we define the \wcm{receptive field} of a node $i$ for an $L$-layer GNN (\cref{eq:gnn}) as a set of nodes $V^L_i$ whose features $\{X_{j,:}\mid j\in V^L_i\}$ determines $Z_i$.
For nearest-neighbor message passing, it is easy to see $V^1_i=\{i\}\cup\{j\in[n]\mid A_{i,j}=1\}=\{j\in[n]\mid d(i,j)\leq 1\}$.
Then by induction, for $L$-layer GNNs, the respective filed of node $i$ is $V^L_i=\{j\in[n]\mid d(i,j)\leq L\}$, which is exactly its depth-$L$ \emph{ego-graph}.
Here, we assume the GNN is $L$-layered, and the depth-$L$ \wcm{ego-graph} of node $i$, denoted by $G_i$, is defined as the induced subgraph of nodes within distance $L$.
The above characterization of the ``\emph{local property}'' of GNNs leads to the following equation,
\begin{equation}
\label{eq:receptive-field}
\big[f_\theta(A,X)\big]_{i,:} = \big[f_\theta(A_{G_i},X_{G_i})\big]_{1,:}\quad\forall~i\in[n],
\tag{RF}
\end{equation}
where $A_{G_i}$ and $X_{G_i}$ denote the adjacencies and node features in the ego-graph $G_i$, respectively, where we always \emph{re-number} the center node $i$ in $G_i$ as the first node. 

\wcm{Ego-graph-wise graph coreset} can then be formulated by substituting~\cref{eq:receptive-field} into~\cref{eq:graph-coreset},
\begin{equation}
\label{eq:ego-spacial-coreset}
\min\limits_{w\in\mathcal{W}}\max\limits_{\theta\in\Theta}\Big|\sum_{i\in [n_{t}]}w_i\cdot\ell\big([f_\theta(A_{G_i},X_{G_i})]_{1,:},~y_i\big)-\mathcal{L}(\theta)\Big|.
\tag{\textcolor{blue}{EG-GC}}
\end{equation}
Compared with node-wise graph coreset (\cref{eq:node-coreset}), ego-graph-wise selection has the following advantages:
\begin{enumerate*}[label=(\arabic*)]
    \item it avoids the non-independence issue as we are now selecting ego-graphs independently, i.e., whether $G_j$ ($j\neq i$) is selected will not affect the embedding $[f_\theta(A_{G_i}, X_{G_i})]_{1,:}$ of node $i$;
    \item it is equivalent to the original objective (\cref{eq:graph-coreset}) which ensures optimality; and
    \item although the adjacencies and node features in the union of selected ego-graphs $\bigcup_{i\in V_w}G_i$ are kept and their size could be $O(d_{\max}^L)$ times of the node-wise selected data (where $d_{\max}$ is the largest node degree), we find that we can highly compress the ego-graph node features via principal component analysis (PCA), depending on how far away the nodes are from the selected center nodes $V_w$, which eventually leads to comparable data size reduction. See~\cref{fig:ego-compression} and~\cref{apd:proofs} for details. 
\end{enumerate*}

\section{Spectral Greedy Graph Coresets}
\label{sec:theory}

Despite the numerous benefits of selecting ego-graph subsets, addressing~\cref{eq:ego-spacial-coreset} remains a \emph{challenging} task due to the highly non-linear nature of the GNN $f_\theta$, and it is \emph{costly} because the evaluation of $A_{G_i}$ and $X_{G_i}$ necessitates locating the ego-graph $G_i$, a process that incurs a time complexity of $O(d_{\max}^L)$.
This section introduces a method that is both efficient and effective for solving~\cref{eq:ego-spacial-coreset}, designed to \emph{circumvent the non-linearities} present in $f_\theta$ without the need for direct evaluation of $A_{G_i}$ and $X_{G_i}$ for \emph{any} node.
The key idea involves reformulating~\cref{eq:ego-spacial-coreset} within the \emph{graph spectral domain}.

\wcm{Graph spectral domain} refers to the eigenspace of the graph Laplacian, which also encompasses the corresponding spectral feature space.
Consider the symmetrically normalized Laplacian $L=I_n-D^{-1/2}AD^{-1/2}$ where $D$ is the diagonal degree matrix.
By applying eigendecomposition, we obtain $L=U\diag{\lambda_1,\ldots,\lambda_n}U\tran$, where the eigenvalues satisfy $0\leq\lambda_1\leq\cdots\leq\lambda_n\leq2$, and each column $\egv_i=U_{:,i}$ represents an eigenvector.
Features or embeddings can be transformed into the spectral domain by left-multiplying with $U\tran$, for instance, $U\tran X$. Here, the $i$-th row $[U\tran X]_{i,:}$ corresponds to the features associated with eigenvalue $\lambda_i$.

Likewise, for each ego-graph $G_i$, it is possible to identify the \emph{spectral representation of ego-graph embeddings}, represented as $\wt{Z}i=U{G_i}\tran f_\theta(A_{G_i},X_{G_i})$.
To simplify our analysis and ensure that $\wt{Z}_i$ associated with different eigenvalues have identical dimensions, we introduce a nuanced concept of ego-graphs, termed \emph{diffusion ego-graphs}.
Consider the diffusion matrix $P=\frac12I_n+\frac12D^{-1}A$, characterized as right stochastic --- meaning each row sums to $1$, and it models a lazy-random walk across the graph.
The matrix $P$ can be diagonalized simultaneously with $L$, producing eigenvalues in the order $1\geq1-\frac12\lambda_1\geq\cdots\geq1-\frac12\lambda_n\geq0$.
The diffusion ego-graph $\wt{G}_i$ of node $i$ s defined as the induced subgraph comprising $\wt{V}_i^L=\{\text{indices of the}~p~\text{largest entries of}~[P^{L}]_{i,:}\}$.
When $p$ is sufficiently large, $\wt{G}_i\supseteq G_i$ for every $i$, ensuring that~\cref{eq:receptive-field} is satisfied.

\scm{Small variation of spectral embeddings on ego-graphs.} We start from a key observation that the variation of the spectral embeddings on ego-graphs $\wt{Z}_i=U_{G_i}\tran f_\theta(A_{G_i}, X_{G_i})$ across all the nodes $i\in[n]$ is small, when $p$ is not too small, for all possible $\theta$. This is formalized as follows.
\begin{assumption}[Bounded Variation of $\wt{Z}_i$]
For large enough graph size $n$ and ego-graph size $p$, we assume for any model parameter $\theta\in\Theta$, $\text{RSD}(\wt{Z}) \coloneqq \sqrt{\frac{1}{n}\sum_{i\in[n]} \|\wt{Z}_i-\wt{Z}\|_{F}}\big/\|\wt{Z}\|_{F} < B$, i.e., the relative standard deviation (RSD) of $\wt{Z}$ is upper-bounded by a constant $B>0$ independent of $\theta$, where $\wt{Z} = \frac{1}{n} \sum_{i\in[n]} \wt{Z}_i$ is the node-wise average of $\wt{Z}_i$.
\end{assumption}\label{assump:bounded-spectral-variance}
In~\cref{fig:spectral-concentration}, we plot $\text{RSD}(\wt{Z}_i)$ versus $p$ on the Cora dataset~\citep{yang2016revisiting}, where we find the RSD drops vastly when $p$ increases. 
The intuition behind this phenomenon is that many real-world graphs (e.g., citation and social networks) are often \emph{self-similar}, where the spectral representations of large-enough ego-graphs are close to the full graphs'.

\scm{Approximately decompose the ego-graph-wise graph coreset objective~\cref{eq:ego-spacial-coreset} in spectral domain.} We can re-write \cref{eq:receptive-field} in the spectral domain of each ego-graph as
\begin{equation}
\label{eq:spectral-receptive-filed}
    \ell([f_\theta(A_{G_i},X_{G_i})]_{1,:},~y_i)=\ell([U_{G_i}]_{1,:}\wt{Z}_i, y_i),
\tag{SRF}
\end{equation}
since $[f_\theta(A_{G_i},X_{G_i})]_{1,:}=[U_{G_i}U_{G_i}\tran f_\theta(A_{G_i},X_{G_i})]_{1,:}=[U_{G_i}]_{1,:}U_{G_i}\tran $\\$ f_\theta(A_{G_i},X_{G_i})=[U_{G_i}]_{1,:}\wt{Z}_i$.
We now denote $\cv_i\coloneqq[U_{G_i}]_{1,:}\tran\in\mathbb{R}^{p}$ (not an eigenvector), $\wt{\ell}_i(\wt{Z})=\ell(\cv_i\tran\wt{Z},~y_i)$, and $\wt{\mathcal{L}}(\wt{Z})=\frac{1}{n_t}\sum_{i\in[n_t]}\wt{\ell}_i(\wt{Z})$.
Since by~\cref{assump:bounded-spectral-variance}, we assume $\wt{Z}_i\approx\wt{Z}$ for all $\theta\in\Theta$ and $i\in[n]$, we propose to approximately achieve the goal of ego-graph-wise coreset (\cref{eq:ego-spacial-coreset}) by: 
(1) finding the subset of labeled nodes to approximate the average spectral embedding,
\begin{equation}
\label{eq:graph-average-coreset}    \min_{w\na\in\mathcal{W}}\max_{\theta\in\Theta}\big\|\sum_{i\in [n_{t}]}w\na_i\cdot \wt{Z}_i - \wt{Z}~\big\|_{F},
\tag{\textcolor{green}{NAC}}
\end{equation}
which we call \wcm{node-wise average coresets}; and
(2) finding the subset of labeled nodes to approximate the node-classification loss,
\begin{equation}
\label{eq:linear-classification-coreset}
    \min_{w\lc\in\mathcal{W}}\max_{\wt{Z}}\big|\sum_{i\in [n_{t}]}w\lc_i\cdot\wt\ell_i(\wt{Z})-\wt{\mathcal{L}}(\wt{Z})\big|,
\tag{\textcolor{green}{LCC}}
\end{equation}
where now the average spectral embedding $\wt{Z}$ is treated as an unknown parameter. Since $\wt{Z}$ is the output embedding and $\wt{\ell}_i(\wt{Z})=\ell(\cv_i\tran\wt{Z},~y_i)$ is a linear classification loss, \cref{eq:linear-classification-coreset} is the \wcm{linear classification coresets}.
Although the optimal sample weights $w\na$ and $w\lc$ (where the superscript $\na$ stands for average and $\lc$ stands for classification) are different, we further require \emph{the corresponding subsets of nodes coincide}, i.e., $V_{w\na}=V_{w\lc}$, and this is realized by the combined coreset algorithm (see~\cref{alg:sggc}).
Moreover, given~\cref{assump:bounded-spectral-variance}, if we can upper-bound the errors in~\cref{eq:graph-average-coreset,eq:linear-classification-coreset} through the combined coreset algorithm, we can show the approximation error on the node classification loss is also upper-bounded (see~\cref{thm:error-gnn-loss}).

The remaining of this section analyzes how to solve the two coreset selection problems one by one, while we defer the combined greedy algorithm and theoretical guarantees to~\cref{sec:algorithm}.

\subsection{Graph Node-wise Average Coresets}
\label{subsec:node-average-coreset}

\scm{Solving node-wise average coresets (\cref{eq:graph-average-coreset}) approximately without evaluating the spectral embeddings.} For the node-wise average coresets (\cref{eq:graph-average-coreset}), since the evaluation of a single spectral embedding $\wt{Z}_i$ is expensive, we ask: \emph{is it possible to find the coresets approximately without evaluating any $\wt{Z}_i$?}
Surprisingly, this is possible because the spectral embedding $\wt{Z}_i$ is a ``\emph{smooth}'' function of nodes $i$ on the graph.
Here, ``\emph{smothness}'' refers to the phenomena that $\wt{Z}_i$ (as a function of node $i$) varies little across edges, i.e., $\wt{Z}_i\approx \wt{Z}_j$ if $A_{i,j}=1$.
The intuition behind this is simple: ego-graphs of connected nodes have a large overlap $\wt{G}_i\cap\wt{G}_j$, and thus the resulted output embedding is similar no matter what parameter $\theta$ is used.

\scm{Spectral characterization of smoothness.} The spectral transformation can again be used to characterize the degree of smoothness since the eigenvalue $\lambda_i$ represents the smoothness of eigenvector $\egv_i$.
For an entry of the spectral embedding $[\wt{Z}_i]_{a,b}$, we can construct an $n$-dimensional vector $\wt{\mathbf{z}}^{(a,b)}=\big[[\wt{Z}_1]_{a,b},\ldots,[\wt{Z}_n]_{a,b}\big]$\\$\in\mathbb{R}^{n}$ by collecting the corresponding entry of the spectral embedding of each node.
Then, we want to show the inner product $\langle\wt{\mathbf{z}}^{(a,b)}, \egv_i\rangle$ is larger for smaller eigenvalue $\lambda_i$.
Actually, this can be done by first considering the spectral representation of the inputs, i.e., $\wt{X}_i=U_{G_i}\tran X_{G_i}$, where we can similarly define $\wt{\mathbf{x}}^{(a,b)}=\big[[\wt{X}_1]_{a,b},\ldots,[\wt{X}_n]_{a,b}\big]$ and show that if the node features are \textit{i.i.d.} unit Gaussians, then in expectation $\langle\wt{\mathbf{x}}^{(a,b)}, \egv_i\rangle\propto (1-\frac12\lambda_i)^L$; see \cref{lem:smooth-input} in~\cref{apd:proofs}.
Second, we note that the spectral behavior of message-passing GNNs $f_\theta(A,X)$ (\cref{eq:gnn}) is completely characterized by its convolution matrix~\citep{balcilar2021analyzing}; see \cref{fig:gnn-spectral} for practical observations.
Based on this, we can show the corresponding GNN function in the spectral domain $\wt{f}_\theta(\cdot)=U\tran f_\theta(A,U\cdot)$ is Lipschitz continuous if all of the linear weights $W^{(l)}$ in~\cref{eq:gnn} have bounded operator norms (see~\cref{lem:lipschitz-gnn} in~\cref{apd:proofs}).
Based on these results, we can formally characterize the smoothness of spectral embeddings (see~\cref{prop:smooth-embedding} in~\cref{apd:proofs}).

\scm{Upper-bound on the node-wise average error.} Following the work in~\citep{linderman2020numerical,vahidian2020coresets}, and based on~\cref{prop:smooth-embedding}, we can obtain an upper-bound on the node-wise average error $\|\sum_{i\in [n_{t}]}w\na_i\cdot \wt{Z}_i - \wt{Z}\|_{F}\leq M\cdot \|P\mathbf{w}\na-\frac{1}{n}\vones\|$ (see~\cref{thm:node-error-bound} in~\cref{apd:proofs}), where $\mathbf{w}\na=\sum_{i\in[n_t]}w\na_i \bm{\delta}_i\in\mathbb{R}^{n}$, $\bm{\delta}_i$ is the unit vector whose $i$-th entry is one, and $\vones$ is the vector of ones.
We then propose to optimize the upper-bound $\|P\mathbf{w}\na-\frac{1}{n}\vones\|$ which does not depend on $\wt{Z}_i$, enabling us to approximately solve the node-wise average coreset without evaluating a single $\wt{Z}_i$.
\citep{vahidian2020coresets} propose to optimize $ \|P\mathbf{w}\na-\frac{1}{n}\vones\|$ using a variant of the greedy geodesic iterative ascent (GIGA) algorithm~\citep{campbell2018bayesian}, and we follow their approach (see~\cref{sec:algorithm} for details).
\begin{theorem}[\textbf{Upper-bound on the Error Approximating Node-wise Average}]
\label{thm:node-error-bound}
Under all assumptions of~\cref{prop:smooth-embedding}, we have $\|\sum_{i\in [n_{t}]}w\na_i\cdot \wt{Z}_i - \wt{Z}\|_{F}\leq M\cdot \|P\mathbf{w}\na-\frac{1}{n}\vones\|$ for some constant $M>0$.
\end{theorem}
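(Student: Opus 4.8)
The plan is to convert the Frobenius-norm error into a sum of scalar ``numerical-integration'' errors — one per coordinate of the $p\times K$ spectral-embedding matrices — against smooth functions on the graph, and then to control each of them by the diffusion discrepancy $\|P\mathbf{w}\na-\tfrac1n\vones\|$, following the template of \citep{linderman2020numerical,vahidian2020coresets}. First I would extend $\mathbf{w}\na$ to $\R^n$ by zeros on the non-training nodes and use $\wt{Z}=\tfrac1n\sum_{i\in[n]}\wt{Z}_i$. For each entry index $(a,b)$ let $\wt{\mathbf{z}}^{(a,b)}=\big[[\wt{Z}_1]_{a,b},\dots,[\wt{Z}_n]_{a,b}\big]\in\R^n$ be the vector collecting that entry of the spectral embedding across all nodes, exactly as in the paragraph preceding the statement. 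The $(a,b)$ entry of $\sum_{i\in[n_t]}w\na_i\wt{Z}_i-\wt{Z}$ is then $\langle\mathbf{w}\na-\tfrac1n\vones,\ \wt{\mathbf{z}}^{(a,b)}\rangle$ (using $\mathbf{w}\na_i=0$ off $[n_t]$), so
\[
\Big\|\sum_{i\in[n_t]}w\na_i\wt{Z}_i-\wt{Z}\Big\|_F^2=\sum_{(a,b)}\big\langle\mathbf{w}\na-\tfrac1n\vones,\ \wt{\mathbf{z}}^{(a,b)}\big\rangle^2,
\]
and it remains to bound each scalar term by a constant (uniform in $(a,b)$ and in $\theta\in\Theta$) times $\|P\mathbf{w}\na-\tfrac1n\vones\|$, then sum the $pK$ contributions.

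The next step uses \cref{prop:smooth-embedding}: under its hypotheses each $\wt{\mathbf{z}}^{(a,b)}$ is smooth on the graph, uniformly over $\theta$, meaning that in the orthonormal eigenbasis $\{\egv_k\}$ of the normalized Laplacian its coefficients $z_k=\langle\wt{\mathbf{z}}^{(a,b)},\egv_k\rangle$ satisfy $|z_k|\le C\,(1-\tfrac12\lambda_k)^L$ for a constant $C>0$ independent of $\theta$ and of $(a,b)$ — this being the conclusion one obtains by feeding the frequency decay of the spectral inputs (\cref{lem:smooth-input}, $k$-th coordinate $\propto(1-\tfrac12\lambda_k)^L$) through the Lipschitz, filter-characterized spectral GNN map (\cref{lem:lipschitz-gnn}), whose bounded-operator-norm assumptions on the $W^{(l)}$ together with \cref{assump:bounded-spectral-variance} make $C$ uniform in $\theta$. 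Then I would exploit the algebraic fact $P\vones=\vones$ (rows of $P=\tfrac12 I_n+\tfrac12 D^{-1}A$ sum to $1$): passing to the conjugate symmetric operator $\hat{P}=D^{1/2}PD^{-1/2}=\tfrac12 I_n+\tfrac12 D^{-1/2}AD^{-1/2}$, whose eigenvalues are $\mu_k=1-\tfrac12\lambda_k\in[0,1]$ with top eigenvector $\propto D^{1/2}\vones$, one has $\|P\mathbf{w}\na-\tfrac1n\vones\|\asymp\big(\sum_k\mu_k^2\gamma_k^2\big)^{1/2}$ where $\gamma_k$ are the $\{\egv_k\}$-coefficients of $D^{1/2}(\mathbf{w}\na-\tfrac1n\vones)$ and $\asymp$ hides factors between $d_{\min}^{\pm1/2}$ and $d_{\max}^{\pm1/2}$. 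Combining the smoothness bound with Cauchy--Schwarz (pairing $|\gamma_k|\mu_k$ with $\mu_k^{L-1}$) gives
\[
\big|\langle\mathbf{w}\na-\tfrac1n\vones,\ \wt{\mathbf{z}}^{(a,b)}\rangle\big|\ \lesssim\ C\Big(\textstyle\sum_k\mu_k^{2(L-1)}\Big)^{1/2}\,\big\|P\mathbf{w}\na-\tfrac1n\vones\big\|,
\]
where $\sum_k\mu_k^{2(L-1)}$ is a finite graph constant (at most $n$, equal to $n$ when $L=1$). Squaring, summing the $pK$ terms, and taking square roots yields the claim with $M$ equal to $C\sqrt{pK}\,(\sum_k\mu_k^{2(L-1)})^{1/2}$ times the degree factors — a quantity depending only on the graph and on the uniform constant of \cref{prop:smooth-embedding}, not on $\theta$.

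The main obstacle is the basis mismatch between these two steps: $P$ is row-stochastic but not symmetric, so its eigenvectors are a degree reweighting of the orthonormal Laplacian eigenvectors in which smoothness is stated; the work is to verify that this conjugation (i.e. relating $\|P\mathbf{w}\na-\tfrac1n\vones\|$ to $\big(\sum_k\mu_k^2\gamma_k^2\big)^{1/2}$, and $D^{-1/2}\wt{\mathbf{z}}^{(a,b)}$ to a genuinely smooth vector) costs only bounded graph-dependent factors, and that the constant/stationary mode is absorbed without remainder — which it is, precisely because $P\vones=\vones$ lines the $\tfrac1n\vones$ offset up with the $\mu_1=1$ eigendirection. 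A secondary subtlety is that \cref{prop:smooth-embedding} must deliver the multiplicative decay $|z_k|\le C(1-\tfrac12\lambda_k)^L$ \emph{cleanly}, with the slack from the Gaussian-feature, Lipschitz, and self-similarity (\cref{assump:bounded-spectral-variance}) arguments folded into $C$, since the statement carries no additive error term.
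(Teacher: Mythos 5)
Your proposal is correct and follows essentially the same route as the paper's proof: expand each $(a,b)$ entry of the error in the Laplacian eigenbasis, invoke the $(1-\tfrac12\lambda_k)^L$ decay of $\langle\wt{\mathbf{z}}^{(a,b)},\egv_k\rangle$ from \cref{prop:smooth-embedding}, identify the resulting weighted sum with the action of the diffusion operator on $\mathbf{w}\na-\tfrac1n\vones$, and sum over the $p\times K$ entries. The only difference is that you are more careful than the paper about the asymmetry of $P$ (conjugating by $D^{1/2}$ and absorbing degree factors into $M$) and about passing from the $L$-th power of the diffusion operator to the first power; the paper silently treats $P$ as symmetric and writes $\|P^L(\cdot)\|=\|P\mathbf{w}\na-\tfrac1n\vones\|^2$ without comment, so your extra bookkeeping only tightens an argument the paper leaves loose.
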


\begin{proof}
The proof mostly follows from the proof of Theorem 1 in~\citep{linderman2020numerical}. We begin with decomposing an arbitrary $(a,b)$-th entry of $\sum_{i\in [n_{t}]}w\na_i\cdot \wt{Z}_i-\wt{Z}$ in the spectral domain. 
\begin{equation}
\notag
\begin{aligned}
&\textstyle \sum_{i\in [n_{t}]}w\na_i\cdot \wt{\mathbf{z}}^{(a,b)}_i-\frac{1}{n}\sum_{j\in[n]}\wt{\mathbf{z}}^{(a,b)}_j
\\=&\textstyle \sum_{j\in[n]}(\sum_{i\in[n_t]}w\na_i\bm{\delta}_i-\frac{1}{n})\wt{\mathbf{z}}^{(a,b)}_j
\\=&\textstyle \sum_{k\in[n]}\big\langle\sum_{i\in[n_t]}w\na_i\bm{\delta}_i-\frac{1}{n}, \egv_k\big\rangle \langle\egv_k, {\mathbf{z}}^{(a,b)}\rangle 
\\\leq&\textstyle M'\cdot \sum_{k\in[n]}\big\langle\sum_{i\in[n_t]}w\na_i\bm{\delta}_i-\frac{1}{n}, \egv_k\big\rangle (1-\frac12\lambda_k)^L.
\end{aligned}
\end{equation}

Thus we have,
\begin{equation}
\notag
\begin{aligned}
&\textstyle \big|\sum_{i\in [n_{t}]}w\na_i\cdot \wt{\mathbf{z}}^{(a,b)}_i-\frac{1}{n}\sum_{j\in[n]}\wt{\mathbf{z}}^{(a,b)}_j\big|^2 \\\leq&\textstyle M'^2\cdot \sum_{k\in[n]}\big|\big\langle\sum_{i\in[n_t]}w\na_i\bm{\delta}_i-\frac{1}{n}, \egv_k\big\rangle\big|^2 (1-\frac12\lambda_k)^{2L}
\\=&\textstyle M'^2\big\| \sum_{k\in[n]} (1-\frac12\lambda_k)^{L}\big\langle\sum_{i\in[n_t]}w\na_i\bm{\delta}_i-\frac{1}{n}, \egv_k\big\rangle \egv_k \big\|^2\\=&\textstyle M'^2\cdot \big\|P^L (\sum_{i\in[n_t]}w\na_i\bm{\delta}_i-\frac{1}{n})\big\|\\=&\textstyle M'^2 \cdot \|P\mathbf{w}\na-\frac{1}{n}\vones\|^2.
\end{aligned}
\end{equation}

Thus we conclude,
\begin{equation}
\notag
\|\sum_{i\in [n_{t}]}w\na_i\cdot \wt{Z}_i - \wt{Z}\|_{F}\leq M\cdot \|P\mathbf{w}\na-\frac{1}{n}\vones\|,
\end{equation}
where $M=M'\cdot p d$. Here $p$ is the (diffusion) ego-graph size and $d$ is the number of features per node.
\end{proof}

\subsection{Spectral Linear Classification Coresets}
\label{subsec:spectral-classification-coreset}

There are more available approaches to solve the linear classification coreset~\cref{eq:linear-classification-coreset}, and we adopt the submodular maximization formulation in~\citep{mirzasoleiman2020coresets}.

\scm{Submodular maximization formulation of linear classification coreset.} Following~\citep{mirzasoleiman2020coresets}, we can show the approximation error in~\cref{eq:linear-classification-coreset} can be upper-bounded by a set function $H(\cdot)$, i.e., $|\sum_{i\in [n_{t}]}w\lc_i\cdot\wt\ell_i(\wt{Z})-\wt{\mathcal{L}}(\wt{Z})|\leq H(V_{w\lc})$, where $H(V_{w\lc})\coloneqq\sum_{i\in[n_t]}\min_{j\in V_{w\lc}} \max_{\wt{Z}}|\ell_i(\wt{Z})-\ell_j(\wt{Z})|$ (see~\cref{lem:submodular-error-bound} in~\cref{apd:proofs}). 
Then, by introducing an auxiliary node $\{i_0\}$, we can define a submodular function $F(V)\coloneqq H(\{i_0\})-H(V\cup\{i_0\})$, and formulate the coreset selection as a submodular set-cover problem. 
Due to the efficiency constraints, \citep{mirzasoleiman2020coresets} propose to solve the submodular maximization problem instead, $\max_{w\lc\in\mathcal{W}}F(V_{w\lc})$, which is dual to the original submodular cover formulation. We follow this approach and adopt their CRAIG (CoResets for Accelerating Incremental Gradient descent) algorithm for the linear classification coreset.
It is worth noting that, although the CRAIG formulation discussed above can be used to solve the original ego-graph-wise coreset problem (\cref{eq:ego-spacial-coreset}) directly, it suffers from a much larger complexity as we have to forward- and backward-pass through the GNN all the time, and evaluate all ego-graph embeddings explicitly.
\section{Algorithm and Theoretical Analysis}
\label{sec:algorithm}

\scm{The spectral greedy graph coresets (SGGC) algorithm.} We now describe how we combine the two greedy algorithms, GIGA and CRIAG, to achieve both objectives respectively, with an extra constraint that they find the same subset of nodes, i.e., $V_{w\na}=V_{w\lc}$.
The idea is to incorporate the submodular cost $F(V_{w\lc})=F(V_{w\na})$ into the SCGIGA's objective.
Through the introduction of a hyperparameter $0<\kappa<1$, we change the objective of the node-wise average coreset to be $ \|P\mathbf{w}\na-\frac{1}{n}\vones\|-\kappa F(V_{w\na})$.
Now, the submodular cost $F(V_{w\na})$ can be understood as a selection cost, and the new objective can be solved by a relaxation on the GIGA algorithm, which is called SCGIGA as discussed in~\citep{vahidian2020coresets}.
The complete pseudo-code is shown below (see~\cref{apd:proofs} for more details).

\begin{algorithm}
\caption{\label{alg:sggc}Spectral greedy graph coresets (SGGC).}
\KwIn{Diffusion matrix $P=\frac12I_n+\frac12D^{-1}A$, coreset size $c$, hyperparameter $0<\kappa<1$.}
Initialize weights $w_0\na\leftarrow \mathbf{0}$, $w_0\lc\leftarrow\mathbf{0}$\;
\For{$t=0,\ldots,c-1$}{
    Compute $P(w\na_t)=\sum_{i\in[n_t]}[w\na_t]_i\frac{P_{:,i}}{\|P_{:,i}\|}$\;
    Compute $\mathbf{a}_t \leftarrow \frac{\vones-\left\langle\vones, P\left(w_t\na\right)\right\rangle P\left(w_t\na\right)}{\left\|\vones-\left\langle\vones, P\left(w_t\na\right)\right\rangle P\left(w_t\na\right)\right\|}$, and for each $i\in[n_t]$, $\mathbf{b}_t^i \leftarrow \frac{P_i-\left\langle P_i, P\left(w_t\na\right)\right\rangle P\left(w_t\na\right)}{\left\|P_i-\left\langle P_i, P\left(w_t\na\right)\right\rangle P\left(w_t\na\right)\right\|}$\;
    Find subset $V_t=\{i\in[n_t]\mid \langle\mathbf{a}_t,~\mathbf{b}_t^i\rangle \geq\kappa\cdot\max_{j\in[n_t]}\langle\mathbf{a}_t,~\mathbf{b}_t^j\rangle\}$\;
    Select node $i^*=\arg\max_{i\in V_t}F(\{i\}\cup V_{w\na})-F(V_{w\na})$\;
    Compute $\zeta_0=\langle\frac{\vones}{\sqrt{n}}, P_{i^*}\rangle, \zeta_1=\langle\frac{\vones}{\sqrt{n}}, P\left(w_t)\right\rangle, \zeta_2=\langle P_{i^*}, P\left(w_t\right)\rangle$, and $\eta_t \leftarrow \frac{\zeta_0-\zeta_1 \zeta_2}{\left(\zeta_0-\zeta_1 \zeta_2\right)+\left(\zeta_1-\zeta_0 \zeta_2\right)}$\;
    Update weights $w_{t+1}\na \leftarrow \frac{\left(1-\eta_t\right) w_t\na+\eta_t \bm{\delta}_{i^*}}{\left\|\left(1-\eta_t\right) P\left(w_t\na\right)+\eta_t P_{i^*}\right\|}$\;
}
Compute $[w\na]_i\leftarrow\frac{1}{n\|P_{:,i}\|\|\sum_{j\in[n_t]}[w\na_c]_j P_{:,j}\|}[w_c\na]_i\quad\forall i\in[n_t]$\;
Compute $w\lc=\sum_{j\in[n_t]} \vones\big\{i=\arg \min _{k\in V_{w\na}} \max_{\wt{Z}} |\wt{\ell}_j(\wt{Z})-\wt{\ell}_k(\wt{Z})|\big\}$\;
Combine $w_i\leftarrow w\na_i\cdot w\lc_i$ for each $i\in[n_t]$, and normalize $w\leftarrow w/\|w\|_1$\;
\Return{coreset $V_w$, weights $w$}
\end{algorithm}

\scm{Theoretical guarantees of SGGC.} Based on the correctness theorems of SCGIGA and CRAIG, and~\cref{assump:bounded-spectral-variance}, we can prove the following error-bound on the node-classification loss, which shows SGGC approximately solves the graph coresets problem (\cref{eq:graph-coreset}) (see~\cref{apd:proofs}).
\begin{theorem}[Error-Bound on Node Classification Loss]
\label{thm:error-gnn-loss}
If both~\cref{eq:node-coreset} and~\cref{eq:linear-classification-coreset} have bounded errors and~\cref{assump:bounded-spectral-variance} holds, then we have, $\max_{\theta\in\Theta}\big|\sum_{i\in [n_{t}]}w\na_i w\lc_i\cdot$\\$\ell\big([f_\theta(A_{G_i},X_{G_i})]_{1,:},~y_i\big)-\mathcal{L}(\theta)\big| < \epsilon$, where $\epsilon$ does not depend on the coreset size $c$ and the number of training nodes $n_t$.
\end{theorem}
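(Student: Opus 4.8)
The plan is to move everything into the graph-spectral domain and then chain three triangle-inequality estimates. By~\cref{eq:receptive-field,eq:spectral-receptive-filed}, for every fixed $\theta\in\Theta$ both the coreset loss term and the population loss term may be written through the per-ego-graph spectral embeddings: $\ell([f_\theta(A_{G_i},X_{G_i})]_{1,:},y_i)=\wt\ell_i(\wt Z_i)$ with $\wt\ell_i(\wt Z)=\ell(\cv_i\tran\wt Z,y_i)$, $\cv_i=[U_{G_i}]_{1,:}\tran$, and $\mathcal L(\theta)=\frac1{n_t}\sum_{i\in[n_t]}\wt\ell_i(\wt Z_i)$. Introducing the loss evaluated at the common average embedding, $\wt{\mathcal L}(\wt Z)=\frac1{n_t}\sum_{i\in[n_t]}\wt\ell_i(\wt Z)$, we get $\big|\sum_i w\na_i w\lc_i\wt\ell_i(\wt Z_i)-\mathcal L(\theta)\big|\le(\mathrm A)+(\mathrm B)+(\mathrm C)$ where
\begin{align*}
(\mathrm A)&=\Big|\sum\nolimits_i w\na_i w\lc_i\big(\wt\ell_i(\wt Z_i)-\wt\ell_i(\wt Z)\big)\Big|,\\
(\mathrm B)&=\Big|\sum\nolimits_i w\na_i w\lc_i\,\wt\ell_i(\wt Z)-\wt{\mathcal L}(\wt Z)\Big|,\qquad (\mathrm C)=\big|\wt{\mathcal L}(\wt Z)-\mathcal L(\theta)\big|.
\end{align*}

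Term $(\mathrm B)$ is exactly the combined linear-classification-coreset error of~\cref{eq:linear-classification-coreset} at the (now free) parameter $\wt Z$, so the CRAIG guarantee of~\cref{lem:submodular-error-bound} bounds it --- uniformly in $\wt Z$ --- by an average covering radius of the common support $V_{w\na}=V_{w\lc}$ over the $n_t$ training nodes, a constant not growing with $n_t$ or $c$; here the bounded-error hypothesis on~\cref{eq:graph-average-coreset} (i.e.\ \cref{thm:node-error-bound}) enters \emph{indirectly}, by certifying that this support is well spread and hence a good cover. For $(\mathrm A)$ and $(\mathrm C)$ I would use: (i) the softmax cross-entropy $\ell(\cdot,y)$ has logit-gradient $\mathrm{softmax}(\cdot)-\mathbf e_y$, hence is $\sqrt2$-Lipschitz, and $\cv_i$ is a unit vector (a row of the orthogonal matrix $U_{G_i}$), so $|\wt\ell_i(\wt Z')-\wt\ell_i(\wt Z'')|\le\sqrt2\,\|\wt Z'-\wt Z''\|_F$; and (ii) \cref{assump:bounded-spectral-variance} in the pointwise form ``$\wt Z_i\approx\wt Z$ for all $i$ and all $\theta$'' asserted when it is introduced, together with $\sup_{\theta\in\Theta}\|\wt Z\|_F<\infty$ --- which follows from the bounded-operator-norm condition on the $W^{(l)}$ already invoked for~\cref{lem:lipschitz-gnn}, since $\wt Z_i$ is an orthogonal image of $f_\theta(A_{G_i},X_{G_i})$ --- yielding $\|\wt Z_i-\wt Z\|_F\le B'$ for an absolute constant $B'$. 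Then $(\mathrm C)\le\sqrt2\,B'$, and, after the final normalization of $w$ in~\cref{alg:sggc}, $(\mathrm A)\le\sqrt2\,B'$; hence one may take $\epsilon=2\sqrt2\,B'+(\text{the }\wt Z\text{-uniform CRAIG bound})$, which depends only on $B$, $\Theta$, the ego-graph size $p$ and the data, not on $c$ or $n_t$, and the outer $\max_{\theta\in\Theta}$ is harmless as every bound was uniform in $\theta$.

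The main obstacle is $(\mathrm A)$. The \emph{formal}~\cref{assump:bounded-spectral-variance} controls only the \emph{uniform} average $\frac1n\sum_i\|\wt Z_i-\wt Z\|_F$, whereas $(\mathrm A)$ is weighted by the possibly concentrated coreset weights, so a bound $\sum_i w_i\|\wt Z_i-\wt Z\|_F\le\|w\|_\infty\,n\,(\text{average})$ is too lossy unless $\|w\|_\infty=O(1/n)$, which coreset weights need not satisfy. One clean resolution is to use the pointwise form of the assumption, as above. A sharper route is to Taylor-expand each $\wt\ell_i$ at $\wt Z$: $(\mathrm A)\le\big|\sum_i w\na_i w\lc_i\langle\nabla\wt\ell_i(\wt Z),\wt Z_i-\wt Z\rangle\big|+O\big(\sum_i w_i\|\wt Z_i-\wt Z\|_F^2\big)$ and attack the first-order part with $\|\sum_i w\na_i\wt Z_i-\wt Z\|_F\le M\|P\mathbf w\na-\tfrac1n\vones\|$ from~\cref{thm:node-error-bound}; but because $\nabla\wt\ell_i(\wt Z)=\cv_i(\mathrm{softmax}(\cv_i\tran\wt Z)-\mathbf e_{y_i})\tran$ still carries node-dependent factors, this needs their near-constancy across the coreset and the $w\lc$-reweighting to be benign --- again a smoothness input. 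I therefore expect the crux to be exactly this passage from the average-RSD assumption to the pointwise (or weighted) control the loss bound consumes, plus the routine check that $\sup_{\theta\in\Theta}\|\wt Z\|_F$ and $M$ are finite uniformly over $\Theta$ and free of $n_t$.
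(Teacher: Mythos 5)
Your proposal follows essentially the same route as the paper's proof: a triangle-inequality chain through the average spectral embedding $\wt{Z}$, combining the GIGA/node-average bound of~\cref{thm:node-error-bound}, the CRAIG/submodular bound of~\cref{lem:submodular-error-bound}, \cref{assump:bounded-spectral-variance}, and the Lipschitzness of $\wt\ell_i$, differing only in bookkeeping (the paper routes the submodular step through a double sum $\sum_j\sum_k w\na_j\wt\ell_k(\wt{Z}_j)$ rather than through $\wt{\mathcal{L}}(\wt{Z})$ directly). The average-versus-pointwise issue you flag in term $(\mathrm A)$ is real and is present in the paper's argument as well, where it is absorbed into the phrase ``Jensen-bound and Lipschitzness'' without the explicit strengthening of \cref{assump:bounded-spectral-variance} that you correctly identify as necessary.
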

\begin{proof}{thm:error-gnn-loss}
First, using~\cref{alg:sggc} and by the Theorem 2 of~\cite{vahidian2020coresets}, we have $\|P\mathbf{w}\na-\frac{1}{n}\vones\|\leq \sqrt{\frac{1}{n}-(\frac{\kappa}{n})^2}\cdot O((1-\kappa^2\epsilon^2)^{c/2})$ for some $\epsilon>0$ and where $c=|V_w|$ is the size of selected coreset. By~\cref{thm:node-error-bound}, we have the upper-bound on the objective of the node-wise average coreset (\cref{eq:node-coreset}), $\|\sum_{i\in [n_{t}]}w\na_i\cdot \wt{Z}_i - \wt{Z}\|_{F}<M\na$ for some $M\na>0$. Then, by Theorem 1 of~\citep{vahidian2020coresets}, we know $F(V_w)=H(\{i_0\})-H(V_w\cup\{i_0\})\geq O(\frac{1-\kappa}{\sqrt{n}})$. Thus by~\cref{lem:submodular-error-bound}, we have an upper-bound on, $\max_{\wt{Z}}|\ell_i(\wt{Z})-\ell_j(\wt{Z})|\leq M\lc$ for some $M\lc>0$ for any $i\in V_w$ and $j\in[n]$. Note the $w\na$ and $w\lc$ above is the output of~\cref{alg:sggc} and we have $V_{w\na}=V_{w\lc}=V_w$.

Second, by repeatedly using the second bound above, we can get $|\sum_{i\in [n_{t}]}w\na_i w\lc_i\cdot\wt\ell_i(\wt{Z}_i)-\sum_{j\in[n_t]}\sum_{k\in[n]}w\na_j\wt\ell_k(\wt{Z}_j)|\leq M\lc$, where $\zeta: [n]\to V_w$ is the mapping described in~\cref{lem:submodular-error-bound}. While, by the first bound above, \cref{assump:bounded-spectral-variance}, the Jensen-bound and the Lipschitzness of $\wt\ell_k$ (the Lipschitz coefficient is an absolute constant), we can derive, $|\sum_{i\in[n_t]}w\na_i \wt\ell_k(\wt{Z}_i)-\wt\ell_k(\wt{Z}_k)|\leq |\sum_{i\in[n_t]}w\na_i \wt\ell_k(\wt{Z}_i)-\wt\ell_k(\wt{Z})|+|\wt\ell_k(\wt{Z})-\wt\ell_k(\wt{Z}_k)|\leq O(M\na+B\|\wt{Z}\|)$. Combining the two inequalities, we conclude the proof.
\end{proof}
\section{Related Work}
\label{sec:related}

\begin{table*}[htbp!]
\caption{SGGC is better than other model-agnostic/based coresets, graph coarsening, and comparable to graph condensation. We train $2$-layer GCNs on the coreset/coarsened/condensed graphs and report the test accuracy. OOT and OOM refer to out-of-time/memory.}\label{tab:compare}
\adjustbox{max width=\textwidth}{%
{\renewcommand{\arraystretch}{1.3}%
{\Large
\begin{tabular}{ccccccccccccccc}
    \hline
    \multirow{2}{*}{\textbf{Dataset}} & \multirow{2}{*}{Ratio} & \multicolumn{3}{c}{Model-Agnostic Coresets} & \multicolumn{6}{c}{Model-Based Coresets}                                                                  & Graph Reduction & Ours                  & Data Condense & Oracle                        \\
                                      &                        & Uniform                & Herding      & K-Center              & Forgetting   & Cal                   & CRAIG                 & Glister               & GraNd                         & GradMatch             & Corasening      & SGGC                  & GCond         & Full Graph                    \\ \hline
    \multirow{3}{*}{Cora}             & 15\%                   & 67.7$\pm$4.5           & 66.1$\pm$1.2 & 64.3$\pm$4.8          & 65.4$\pm$3.1 & 71.6$\pm$1.0          & 68.4$\pm$4.4          & 65.6$\pm$5.6          & \textbf{71.9$\pm$1.7}         & 72.0$\pm$1.3          & ---             & \textbf{72.9$\pm$0.6} & ---           & \multirow{3}{*}{81.2$\pm$0.4} \\
                                      & 25\%                   & 71.8$\pm$4.2           & 69.9$\pm$1.0 & 72.6$\pm$2.5          & 72.6$\pm$3.5 & 75.3$\pm$1.5          & 74.4$\pm$1.7          & 74.3$\pm$2.4          & 74.4$\pm$1.5                  & 74.7$\pm$2.3          & 31.2$\pm$0.2    & \textbf{78.6$\pm$1.0} & 79.8$\pm$1.3  &                               \\
                                      & 50\%                   & 78.3$\pm$2.2           & 70.8$\pm$0.4 & 78.9$\pm$1.0          & 76.1$\pm$1.1 & \textbf{80.7$\pm$0.5} & 78.2$\pm$2.0          & 78.3$\pm$2.0          & \textbf{79.3$\pm$0.8}         & \textbf{80.2$\pm$0.5} & 65.2$\pm$0.6    & \textbf{80.2$\pm$0.8} & 80.1$\pm$0.6  &                               \\ \hline
    \multirow{3}{*}{Citeseer}         & 15\%                   & 53.6$\pm$7.9           & 46.1$\pm$1.6 & 47.5$\pm$6.3          & 51.5$\pm$4.9 & 53.2$\pm$2.0          & 55.4$\pm$6.7          & 54.0$\pm$5.0          & 57.0$\pm$3.9                  & 58.8$\pm$3.9          & ---             & \textbf{63.7$\pm$3.1} & ---           & \multirow{3}{*}{70.6$\pm$0.9} \\
                                      & 25\%                   & 61.7$\pm$3.2           & 54.9$\pm$3.9 & 61.6$\pm$4.0          & 55.3$\pm$5.5 & 56.1$\pm$2.8          & 59.5$\pm$4.3          & \textbf{62.0$\pm$5.5} & 64.4$\pm$1.5                  & \textbf{66.0$\pm$1.5} & 52.2$\pm$0.4    & \textbf{67.2$\pm$2.4} & 70.5$\pm$1.2  &                               \\
                                      & 50\%                   & 66.9$\pm$1.7           & 68.7$\pm$0.5 & 65.6$\pm$1.6          & 67.6$\pm$0.8 & 68.2$\pm$0.8          & 67.9$\pm$2.2          & 67.9$\pm$1.4          & 70.5$\pm$0.8                  & \textbf{70.7$\pm$0.5} & 59.0$\pm$0.5    & 68.4$\pm$0.9          & 70.6$\pm$0.9  &                               \\ \hline
    \multirow{3}{*}{Pubmed}           & 15\%                   & 65.7$\pm$4.5           & 61.9$\pm$1.0 & \textbf{69.0$\pm$4.8} & 65.4$\pm$4.9 & 71.7$\pm$0.8          & \textbf{73.2$\pm$4.1} & 65.6$\pm$5.5          & 62.0$\pm$1.0                  & 65.3$\pm$4.5          & ---             & \textbf{72.5$\pm$1.5} & ---           & \multirow{3}{*}{79.3$\pm$0.6} \\
                                      & 25\%                   & 71.1$\pm$1.8           & 65.9$\pm$0.4 & \textbf{73.3$\pm$2.6} & 69.0$\pm$2.5 & \textbf{74.7$\pm$1.7} & 71.0$\pm$3.3          & 71.5$\pm$3.2          & 70.6$\pm$2.3                  & 71.1$\pm$1.4          & ---             & \textbf{75.8$\pm$1.6} & ---           &                               \\
                                      & 50\%                   & 75.3$\pm$1.1           & 72.2$\pm$0.6 & \textbf{77.8$\pm$1.3} & 72.5$\pm$2.1 & \textbf{77.3$\pm$1.1} & 74.4$\pm$1.7          & 75.1$\pm$2.0          & 76.0$\pm$1.2                  & 74.8$\pm$1.1          & ---             & \textbf{76.5$\pm$0.6} & ---           &                               \\ \hline
    \multirow{3}{*}{Flickr}           & 0.2\%                  & 47.1$\pm$1.6           & 45.5$\pm$1.3 & 46.9$\pm$0.9          & 46.0$\pm$1.2 & OOT                   & OOT                   & \textbf{48.0$\pm$1.0} & \textbf{48.1$\pm$0.4}         & \textbf{48.2$\pm$0.2} & 41.9$\pm$0.2    & \textbf{48.4$\pm$0.8} & 46.5$\pm$0.4  & \multirow{3}{*}{49.1$\pm$0.7} \\
                                      & 1.0\%                    & 48.4$\pm$1.1           & 46.7$\pm$0.3 & 47.5$\pm$0.9          & 47.7$\pm$2.4 & OOT                   & OOT                   & \textbf{48.5$\pm$0.7} & \textbf{48.8$\pm$0.5}         & \textbf{48.7$\pm$0.6} & 44.5$\pm$0.1    & \textbf{49.0$\pm$0.6} & 47.1$\pm$0.1  &                               \\ 
                                      & 2.0\%                    & 47.0$\pm$1.1           & 45.5$\pm$0.6 & 46.9$\pm$0.7          & 46.6$\pm$1.6 & OOT                   & OOT                   & 47.4$\pm$0.9          & OOM                           & 48.5$\pm$0.6          & ---             & \textbf{64.4$\pm$0.4} & ---           &                                \\ \hline
    \multirow{3}{*}{ogbn-arxiv}            & 0.5\%                  & 58.4$\pm$1.5           & 45.7$\pm$4.4 & 56.8$\pm$2.8          & 55.5$\pm$2.4 & OOT                   & OOT                   & 57.2$\pm$2.1          & OOM                           & 53.4$\pm$1.9          & 43.5$\pm$0.2    & \textbf{59.7$\pm$1.5} & 63.2$\pm$0.3  &  \multirow{3}{*}{70.9$\pm$0.2}                  \\
                                      & 1.0\%                    & 62.0$\pm$0.9           & 47.6$\pm$0.4 & 60.7$\pm$0.8          & 60.4$\pm$1.9 & OOT                   & OOT                   & 62.2$\pm$1.3          & OOM                           & 56.5$\pm$1.7          & 50.4$\pm$0.1     & \textbf{62.5$\pm$0.9} & 64.0$\pm$0.4 &                                \\ 
                                      & 2.0\%                    & 64.7$\pm$0.5           & 56.5$\pm$0.5 & 62.4$\pm$0.9          & 62.8$\pm$2.4 & OOT                   & OOT                   & 64.2$\pm$1.1          & OOM                           & 58.6$\pm$1.0          & ---             & \textbf{64.4$\pm$0.4} & ---           &                                \\ \hline                                                                                                                                                                                                                                                                                                                                  
    \multirow{2}{*}{ogbn-products}         & 0.05\%                 & \textbf{46.8$\pm$1.2}  & 31.9$\pm$0.5 & 35.9$\pm$1.9          & 32.9$\pm$4.8 & OOT                   & OOT                   & OOM                   & OOM                           & OOM                   & ---             & \textbf{46.3$\pm$4.1} & ---           &  \multirow{2}{*}{75.6$\pm$0.2}                  \\
                                      & 0.15\%                 & \textbf{53.0$\pm$1.0}  & 36.5$\pm$0.3 & 47.6$\pm$0.8          & 42.0$\pm$3.7 & OOT                   & OOT                   & OOM                   & OOM                           & OOM                   & ---             & \textbf{53.6$\pm$1.2} & ---           &                                \\ \hline
    \multirow{2}{*}{Reddit}           & 0.1\%                  & 27.4$\pm$4.6           & 18.5$\pm$3.5 & 22.5$\pm$4.5          & 26.4$\pm$1.0 & OOT                   & OOT                   & OOM                   & OOM                           & 19.4$\pm$3.5          & ---             & \textbf{38.4$\pm$3.4} & ---           & \multirow{2}{*}{92.2$\pm$0.6}                 \\
                                      & 0.2\%                  & 40.7$\pm$7.2           & 17.0$\pm$4.0 & 20.0$\pm$3.1          & 39.7$\pm$3.5 & OOT                   & OOT                   & OOM                   & OOM                           & 18.3$\pm$3.0          & ---             & \textbf{48.6$\pm$4.6}          & ---           &                                \\ \hline                                         
    
    \end{tabular}
}}}
\end{table*}

In this section, we review general coreset methods, graph coresets, and other graph reduction methods, as well as graph condensation that adapts dataset condensation to graph (see~\cref{apd:related}).

Early coreset selection methods consider unsupervised learning problems, e.g., clustering.
\wcm{Coreset selection} methods choose samples that are important for training based on certain heuristic criteria.
They are usually \wcm{model-agnostic}; for example, \textit{Herding} coreset~\citep{welling2009herding} selects the closest samples to the cluster centers.
\textit{K-center} coreset~\citep{farahani2009facility} picks multiple center points such that the largest distance between a data point and its nearest center is minimized.
In recent years, more coreset methods consider the supervised learning setup and propose many \wcm{model-based} heuristic criteria, such as maximizing the diversity of selected samples in the gradient space~\citep{aljundi2019gradient}, discovering cluster centers of model embedding~\citep{sener2018active}, and choosing samples with the largest negative implicit gradient~\citep{borsos2020coresets}.

\wcm{Graph coreset selection} is a non-trivial generalization of the above-mentioned coreset methods given the interdependent nature of graph nodes. The very few off-the-shelf graph coreset algorithms are designed for graph clustering~\citep{baker2020coresets,braverman2021coresets} and are not optimal for the training of GNNs.

\wcm{Graph sparsification}~\citep{batson2013spectral,satuluri2011local} and \wcm{graph coarsening}~\citep{loukas2018spectrally,loukas2019graph,huang2021scaling,cai2020graph} algorithms are usually designed to preserve specific graph properties like graph spectrum and graph clustering. Such objectives often need to be aligned with the optimization of downstream GNNs and are shown to be sub-optimal in preserving the information to train GNNs well~\citep{jin2021graph}.

\wcm{Graph condensation}~\citep{jin2021graph} adopts the recent \emph{dataset condensation} approach which \textit{synthesizes} informative samples rather than selecting from given ones. 
Although graph condensation achieves the state-of-the-art performance for preserving GNNs' performance on the simplified graph, it suffers from two severe issues: \textbf{(1)} extremely long condensation training time; and \textbf{(2)} poor generalizability across GNN architectures.
Subsequent work aims to apply a more efficient distribution-matching algorithm~\citep{zhao2021datasetb,wang2022cafe} of dataset condensation to graph~\citep{liu2022graph} or speed up gradient-matching graph condensation by reducing the number of gradient-matching-steps~\citep{jin2022condensing}. While the efficiency issue of graph condensation is mitigated, the performance degradation on medium- and large-sized graphs~\citep{jin2021graph} still renders graph condensation practically meaningless.

\section{Experiments}
\label{sec:experiments}

In this section, we demonstrate the effectiveness and advantages of SGGC, together with some important proof-of-concept experiments and ablation studies that verify our design.
We also show the efficiency, architecture-generalizability, and robustness of SGGC.
We define the coreset ratio as $c/n_t$\footnote{Some paper like~\cite{jin2021graph} defines this ratio as $c/n$, which could be small even if we keep all training/labeled nodes, i.e., $c=n_t$ (e.g., on Cora and Citeseer) and is often misleading.}, where $c$ is the size of coreset, and $n_t$ is the number of training nodes in the original graph.
We train 2-layer GNNs with 256 hidden units and repeat every experiment 10 times.
See~\cref{apd:implement} for implementation details and~\cref{apd:experiments} for more results and ablation studies on more datasets.

\begin{table*}[t]
\caption{Selecting diffusion ego-graphs largely outperforms node-wise selection and achieves comparable performance to selecting standard ego-graphs with much smaller ego-graph sizes.}
    \label{tab:selecting}
    \centering
    \adjustbox{max width=\textwidth}{%
    {\renewcommand{\arraystretch}{1.20}%
    {\huge
    \begin{tabular}{cccccccccc} \toprule        \multirow{2}{*}{\begin{tabular}[c]{@{}c@{}}Dataset\\[-5pt] Ratio\end{tabular}}  & \multirow{2}{*}{\begin{tabular}[c]{@{}c@{}}Selection\\[-5pt] Strategy\end{tabular}} & \multicolumn{2}{c}{Model-Agnostic} & \multicolumn{2}{c}{Model-Based} & \multicolumn{2}{c}{Ablation Baselines} & Ours & Oracle                        \\
                              &                                                                               & Uniform               & K-Center                & CRAIG                 & Glister               & CRAIG-Linear            & SCGIGA                & \textbf{SGGC}                  & Full Graph                    \\ \hline
    \multirow{3}{*}{\begin{tabular}[c]{@{}c@{}}Cora\\ 25\%\end{tabular}}     & Node                                                                          & 63.3$\pm$2.7          & 67.7$\pm$2.7            & 64.6$\pm$4.2          & 61.9$\pm$5.5          & 64.1$\pm$4.0           & 63.0$\pm$2.0          & 70.3$\pm$1.2          & \multirow{3}{*}{81.2$\pm$0.4} \\
                              & Std. Ego                                                                      & \textbf{74.3$\pm$2.4} & \textbf{72.7$\pm$3.9}   & \textbf{74.5$\pm$3.3} & \textbf{73.7$\pm$1.9} & \textbf{73.0$\pm$3.4}  & \textbf{75.9$\pm$1.5} & \textbf{77.5$\pm$0.9} &                               \\
                              & Diff. Ego                                                                     & \textbf{73.7$\pm$1.1} & \textbf{72.6$\pm$2.2}   & \textbf{72.0$\pm$3.3} & \textbf{74.0$\pm$2.3} & \textbf{72.7$\pm$3.1}  & \textbf{76.7$\pm$1.9} & \textbf{76.8$\pm$1.0} &                               \\ \midrule
    \multirow{3}{*}{\begin{tabular}[c]{@{}c@{}}Citeseer\\ 25\%\end{tabular}} & Node                                                                          & \textbf{58.1$\pm$3.0} & 52.0$\pm$3.3            & 58.2$\pm$3.9          & 55.1$\pm$3.0          & 57.2$\pm$3.0           & \textbf{55.1$\pm$2.8} & 60.8$\pm$1.7          & \multirow{3}{*}{70.6$\pm$0.9} \\
                              & Std. Ego                                                                      & \textbf{61.8$\pm$4.8} & 56.8$\pm$4.4            & \textbf{60.0$\pm$5.6} & \textbf{59.7$\pm$5.9} & \textbf{61.7$\pm$5.8}  & \textbf{53.4$\pm$1.8} & \textbf{67.1$\pm$1.5} &                               \\
                              & Diff. Ego                                                                     & \textbf{61.7$\pm$3.2} & \textbf{61.6$\pm$4.0}   & \textbf{59.5$\pm$4.3} & \textbf{61.9$\pm$5.5} & \textbf{59.5$\pm$3.8}  & \textbf{54.6$\pm$2.7} & \textbf{67.2$\pm$2.4} &                               \\ \bottomrule
\end{tabular}
    }}}
\end{table*}

\begin{table}[htbp!]
\caption{The complete SGGC algorithm is better than the node-wise average coreset (SCGIGA) and the linear classification coreset (CRAIG-Linear) individually.} \label{tab:ablation}
    \centering
    \adjustbox{max width=0.75\textwidth}{%
    {\renewcommand{\arraystretch}{1.3}%
    {\huge
    \begin{tabular}{ccccccc} \toprule
    Dataset              & \multicolumn{2}{c}{Cora}         & \multicolumn{2}{c}{Citeseer}     & \multicolumn{2}{c}{Pubmed}       \\ \cmidrule(r){2-3} \cmidrule(lr){4-5} \cmidrule(l){6-7}
    Ratio                & 25\%            & 50\%           & 25\%            & 50\%           & 25\%            & 50\%           \\ \midrule
    Full Graph           & \multicolumn{2}{c}{81.2$\pm$0.4} & \multicolumn{2}{c}{70.6$\pm$0.9} & \multicolumn{2}{c}{79.3$\pm$0.6} \\ \arrayrulecolor{black!50} \midrule
    CRAIG-Linear          & 72.7$\pm$2.9                     & 78.1$\pm$1.1                     & 59.5$\pm$3.8            & \textbf{66.6$\pm$1.9}            & 71.6$\pm$3.8            & \textbf{75.4$\pm$2.3}   \\
    SCGIGA               & 76.7$\pm$1.4                     & 78.3$\pm$1.0                     & 54.6$\pm$2.8            & 66.9$\pm$1.1                     & 69.8$\pm$0.7            & 74.1$\pm$0.4     \\ \arrayrulecolor{black!50} \midrule
    \textbf{SGGC (Ours)} & \textbf{78.6$\pm$1.0}            & \textbf{80.2$\pm$1.1}            & \textbf{67.2$\pm$2.4}   & \textbf{68.4$\pm$0.9}            & \textbf{75.8$\pm$1.6}   & \textbf{76.5$\pm$0.6}   \\ \bottomrule
    \end{tabular}
    }}}
\end{table}

\scm{SGGC is better than other model-agnostic or model-based coresets and graph coarsening.}
Now, we demonstrate the effectiveness of SGGC in terms of the test performance (evaluated on the original graph) of GNNs trained on the coreset graph on seven node classification benchmarks with multiple coreset ratios $c/n_t$.
\cref{tab:compare}~presents the full results, where SGGC consistently achieves better performance than the other coreset methods and the graph coarsening approach.
Although graph condensation treats the condensed adjacency $A_w$ and node features $X_w$ as free learnable parameters (have less constraint than coreset methods), the performance is comparable to or even lower than SGGC on Cora and Flickr.
The advantages of SGGC are often more significant when the coreset ratio is small (e.g., on Citeseer with a 15\% ratio), indicating that SGGC is capable of extrapolating on the graph and finding informative ego-graphs when the budget is very limited.

Apart from the three small graphs (Cora, Citeseer, and Pubmed), we also consider two mid-scaled graphs (Flickr and ogbn-arxiv), a large-scale graph (ogbn-products) with more than two million nodes, and a much denser graph (Reddit) whose average node degree is around 50.
In~\cref{tab:compare}, we see that when scaling to larger and denser graphs, many model-based coreset methods, graph coarsening, and graph condensation are facing severe efficiency issues.
SGGC can run on ogbn-product with a coreset ratio $c/n_t=0.05\%$ within 43 minutes, while all the other model-based coresets (except Forgetting), graph coarsening, and graph condensation run out of time/memory.

\begin{figure*}[htbp!]
    \centering
    \begin{minipage}[t]{0.49\textwidth}
        \centering
        \includegraphics[width=.75\linewidth,trim={5pt 5pt 5pt 5pt},clip]{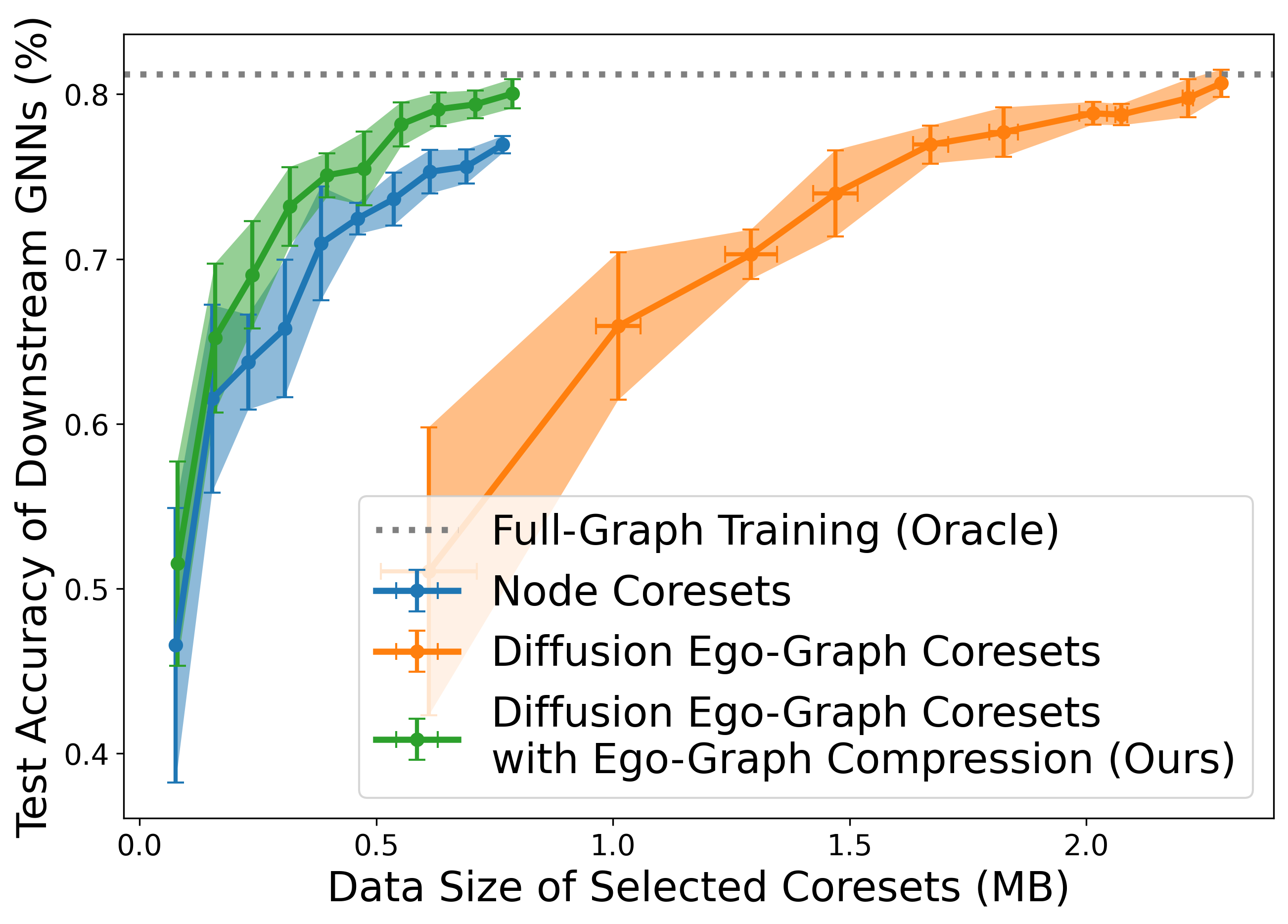}
        \caption{Test accuracy versus the selected data size of selecting nodes and diffusion ego-graphs with/without PCA-based compression of node attributes.}
        \label{fig:ego-compression}
    \end{minipage}
    \hfill
    \begin{minipage}[t]{0.49\textwidth}
        \centering
        \includegraphics[width=.80\linewidth,trim={5pt 0pt 5pt 5pt},clip]{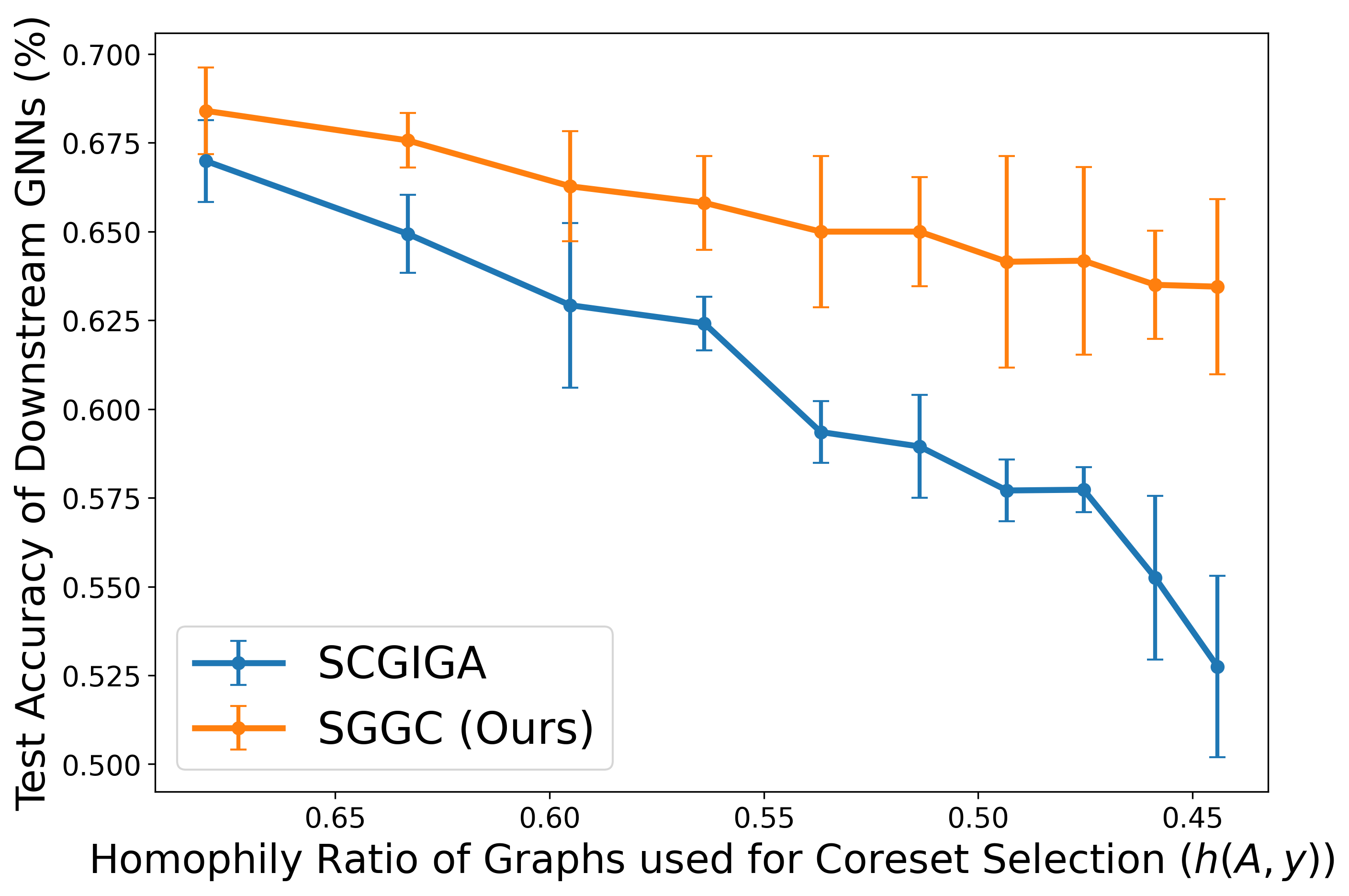}
        \caption{SGGC is more robust than SCGIGA on low-homophily graphs. We select the coresets on the edge-added graph with lower homophily, but train and test GCNs on the original graph.}
        \label{fig:ablation-homo}
    \end{minipage}
\end{figure*}

\scm{Selecting diffusion ego-graphs is better than selecting nodes and standard ego-graphs.}
We also verify that selecting diffusion ego-graphs is advantageous to selecting nodes.
In~\cref{fig:ego-compression}, we show that we can compress the diffusion ego-graphs to achieve data size comparable with node-wise selection without noticeably lowering the performance.
Ego-graph compression is based on the principal component analysis (PCA), and we compress the node features more when they are far away from the selected center nodes (see~\cref{apd:proofs}).
In~\cref{tab:selecting}, we compare the performance of various coreset methods with the three selection strategies, including selecting the standard ego-graphs or diffusion ego-graphs.
Not surprisingly, we see ego-graph selection strategies largely outperform node-wise selection (the largest gap is around 8\%).
Although selecting standard and diffusion ego-graphs often lead to similar performance, we note that, by selecting diffusion ego-graphs, we can achieve comparable performance with ego-graph-size $p=$8 or 16 on most datasets, which is much smaller than the average size of standard ego-graphs for $L=2$, e.g., around $36$ on Cora.

\scm{Ablation study: the two-stage SGGC algorithm is better than each stage individually.} It is important to verify that the combined coreset objective is better than the node-wise average coreset (\cref{eq:node-coreset}, implemented as SCGIGA~\citep{vahidian2020coresets} with zero selection cost) and the linear classification coreset (\cref{eq:linear-classification-coreset}, implemented as the CRAIG algorithm~\citep{mirzasoleiman2020coresets} with a linear model, denoted by CRAIG-Linear) individually (see~\cref{apd:implement} for details).
In~\cref{tab:ablation}, we see SGGC is consistently better than 
\textbf{(1)} CRAIG-Linear (outperformed by 3.8\% on average), which over-simplifies GNNs to a linear classifier and completely ignores the graph adjacency and 
\textbf{(2)} SCGIGA (outperformed by 4.4\% on average), which relies on a possibly wrong assumption that the node-wise classification loss is a ``smooth'' function of nodes over the graph.
Moreover, we find SGGC is more robust than SCGIGA, against the variations of \emph{homophily} in the underlying graph (as shown in~\cref{fig:ablation-homo}), where the homophily is defined as $h(A,\mathbf{y})=\frac{1}{|E|} \sum_{(i,j)\in E} \vones\{y_j=y_k\}$~\citep{ma2021homophily} (i.e., how likely the two end nodes of an edge are in the same class).
SCGIGA's performance is greatly degraded on low-homophily graphs because it assumes the node-wise classification loss to be a ``smooth'' function of nodes over the graph.
When we decrease the graph homophily by randomly adding edges to Cora, this assumption cannot be guaranteed.
Our SGGC does not suffer from this issue because the spectral embedding of ego-graphs is always a ``smooth'' function over graph (see~\cref{sec:theory}).

\scm{SGGC generalizes better than graph condensation and is more efficient.}
Finally, we compare SGGC with graph condensation~\citep{jin2021graph} in terms of the generalizability across GNN architectures and running time.
GCond is model-dependent and generalizes poorly across architectures.
In~\cref{tab:generalization}, we see the performance of graph condensation heavily relies on the GNN architecture used during condensation, while our SGGC is model-agnostic and generalizes well to various types of GNNs.
Although the best performance of graph condensation is comparable to SGGC in ~\cref{tab:generalization}, if we do not tune the architecture for condensation, it is much lower on average.
Specifically, when using SGC for condensation, the test performance of GCond is comparable to SGGC's.
However, when using other architectures, including GCN and SAGE during condensation, the test accuracy of GCond drops for at least 2\% in all settings. 
In terms of running time, apart from the fact that GCond cannot scale to large graphs like ogbn-product, it is much slower than SGGC.
On ogbn-arxiv with the coreset ratio $c/n_t=0.05\%$, graph condensation runs for 494s while SGGC only requires 133s.

\begin{table}[htbp!]
\caption{SGGC generalizes better across GNN architectures than graph condensation (GCond) on Cora with a 50\% ratio.}
\label{tab:generalization}
\centering
\adjustbox{max width=0.75\textwidth}{
\renewcommand{\arraystretch}{1.0} 
{%
\begin{tabular}{ccccc}
        \toprule
        Method & \begin{tabular}[c]{@{}c@{}}Architecture\\used during\\Compression\end{tabular} & GCN & SAGE & SGC \\
        \midrule
        \multirow{3}{*}{GCond} & GCN & 70.6$\pm$3.7 & 60.2$\pm$1.9 & 68.7$\pm$5.4 \\
        & SAGE & 77.0$\pm$0.7 & 76.1$\pm$0.7 & \textbf{77.7$\pm$1.8} \\
        & SGC & \textbf{80.1$\pm$0.6} & \textbf{78.2$\pm$0.9} & \textbf{79.3$\pm$0.7} \\
        \textbf{SGGC (Ours)} & N/A & \textbf{80.2$\pm$1.1} & \textbf{79.1$\pm$0.7} & \textbf{78.5$\pm$1.0} \\
        \bottomrule
    \end{tabular}
}}
\end{table}
\section{Conclusions}
\label{sec:conclusions}

This paper proposes spectral greedy graph coreset (SGGC), a coreset selection method on graph for graph neural networks (GNNs), and node classification.
For the theoretical limitations, we note that the small variation assumption of spectral embeddings on ego graphs may not hold for non-message-passing GNNs and very dense graphs.
For the practical limitations, we address the problem that although SGGC is practically very efficient, similar to most of the coreset algorithms, it has a $O(c n_t n)$ time complexity.
This hinders us from applying a large coreset ratio on very large graphs, which consequently bottlenecks the downstream GNN performance on the coreset graph.
Future work may consider more efficient setups, e.g., online coreset selection and training on graphs with hundreds of millions of nodes.
Considering broader impacts, we view our work mainly as a methodological contribution, which paves the way for more resource-efficient graph representation learning.
Our innovations can enable more scalable ways to do large-network analysis for social good.
However, progress in graph learning might also trigger other hostile social network analyses, e.g., extracting fine-grained user interactions for social tracking.

\section*{Acknowledgement}
Ding and Huang are supported by DARPA Transfer from Imprecise and Abstract Models to Autonomous Technologies (TIAMAT) 80321, National Science Foundation NSF-IIS-2147276 FAI, DOD-ONR-Office of Naval Research under award number N00014-22-1-2335, DOD-AFOSR-Air Force Office of Scientific Research under award number FA9550-23-1-0048, DOD-DARPA-Defense Advanced Research Projects Agency Guaranteeing AI Robustness against Deception (GARD) HR00112020007, Adobe, Capital One and JP Morgan faculty fellowships.

\clearpage
\newpage

\appendix

\section{Proofs and More Theoretical Discussions}
\label{apd:proofs}

In this section, we present the proofs and the complete lemmas or theorems for the theoretical results mentioned in the main paper. We divide this section into three parts; each corresponds to a section/subsection of the paper. We also describe the PCA compression algorithm to the ego-graph's node features at last.

\subsection{Proofs for~\texorpdfstring{\cref{subsec:node-average-coreset}}{Section 3.1}}

We start from the theoretical results in~\cref{subsec:node-average-coreset}. For this subsection, our goal is to show that the spectral embeddings of ego-graphs, as a function of the center node, is a ``smooth'' function on the graph, in the sense that when transformed into the graph spectral domain, their high-frequency components is relatively small.

The above-mentioned ``smoothness'' characterization of ego-graph's spectral embeddings is presented as~\cref{prop:smooth-embedding} in the main paper. In order to prove~\cref{prop:smooth-embedding}, as analyzed in~\cref{sec:theory}, we first need to prove the spectral representation of ego-graph's input features is ``smooth'' (\cref{lem:smooth-input}) under some assumptions, and then show the GNNs we considered (including GCN~\citep{balcilar2021analyzing}) are Lipschitz continuous in the spectral domain (\cref{lem:lipschitz-gnn}, as a corollary of the Theorem 4 in~\citep{balcilar2021analyzing}).

\begin{lemma}[\textbf{Smoothness of the Spectral Representation of Ego-graph's Input Features}]
\label{lem:smooth-input}
For an arbitrary entry ($a$-th row, $b$-th column) of the spectral representations of the ego-graphs' features, $\wt{X}_i=U_{G_i}^\top X_{G_i}$, we denote $\wt{\mathbf{x}}^{(a,b)}=\big[[\wt{X}_1]_{a,b},\ldots,[\wt{X}_n]_{a,b}\big]$. If all node features, i.e., all entries of $X$ are \textit{i.i.d.} unit Gaussians, then $\mathbb{E}[\langle\wt{\mathbf{x}}^{(a,b)}, \egv_i\rangle]=(1-\frac12\lambda_i)^L$.
\end{lemma}
\begin{proof}
For simplicity and without loss of generality, we consider there is only one feature per node, i.e., $X\in\mathbb{R}^{n}$ is a column vector. Thus $\wt{X}_i\in\mathbb{R}^{n}$ is also a column vector. We consider the $k$-th entry of the spectral representation. 

We have,
\begin{equation}
\notag
\begin{split}
&\quad[U_{G_i}^\top X_{G_i}]_k=[U_{G_i}^\top X_{G_i}]_k=\sum_{j\in V_i^L}[U_{G_i}^\top]_{k,j}X_j\\
&= \sum_{j\in[n]}[U^\top]_{k,j}X_j[P^L]_{j,i}\\
&= [U^\top \diag{X_1,\ldots,X_n} U (I-\frac12 \Lambda)^L U^\top]_{k,i}.
\end{split}
\end{equation}
Due to $\mathbb{E}[U^\top \diag{X_1,\ldots,X_n}U]=I$. Thus we have $\mathbb{E}[\wt{\mathbf{x}}^{(k)}] \approx [(I-\frac12 \Lambda)^L U^\top]_{k,:}$, and $\mathbb{E}[\langle\wt{\mathbf{x}}^{(k)}, \egv_i\rangle]=(1-\frac12\lambda_i)^L$.

\end{proof}

\begin{lemma}[\textbf{Lipschitzness of GCN in Spectral Domain}]
\label{lem:lipschitz-gnn}
For an $L$-layer GCN~\citep{kipf2016semi}, if all linear weights $W^{(l)}$ have bounded operator norm, then the corresponding GNN function in the spectral domain $\wt{f}_\theta(\cdot)=U\tran f_\theta(A,U\cdot)$ is Lipschitz continuous.
\end{lemma}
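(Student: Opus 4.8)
The plan is to unwind the layer recursion of the GCN in the spectral basis and show that each layer is a composition of Lipschitz maps with a controlled Lipschitz constant, then compose. First I would write the GCN forward pass \eqref{eq:gnn} in the spectral coordinates. Since the convolution matrix of GCN is a fixed polynomial (indeed a degree-one polynomial, $\tilde{A} = \tilde{D}^{-1/2}(A+I)\tilde{D}^{-1/2}$, or in the normalized-Laplacian picture $I-L$) in the graph Laplacian, it is simultaneously diagonalized by $U$: writing $C(A) = U g(\Lambda) U\tran$ for a bounded scalar filter function $g$ (here $|g(\lambda_k)| = |1-\lambda_k| \le 1$ on the spectrum, or a similarly bounded constant after the renormalization trick — the key point being $\|C(A)\|_{\mathrm{op}} \le c_0$ for an absolute constant $c_0$). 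Then if $X = U\xi$ is an input written in the spectral basis, one layer maps $\xi \mapsto U\tran \sigma(C(A) U \xi W) = U\tran \sigma(U g(\Lambda)\xi W)$, so that $\wt f_\theta$ is a composition of the maps $\xi \mapsto g(\Lambda)\xi$ (linear, operator norm $\le c_0$), $\xi \mapsto \xi W^{(l)}$ (linear, operator norm $= \|W^{(l)}\|_{\mathrm{op}}$, bounded by hypothesis), conjugation by the orthogonal $U$ (an isometry), and the pointwise nonlinearity $\sigma$.

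The second step is to observe that the nonlinearity $\sigma$ (ReLU, or more generally any $1$-Lipschitz activation used in GCNs) is $1$-Lipschitz as a map on matrices in Frobenius norm, because it acts entrywise and each scalar coordinate map is $1$-Lipschitz. Conjugation $Y \mapsto U\tran Y$ and $Y\mapsto UY$ preserves the Frobenius norm exactly since $U$ is orthogonal, hence is $1$-Lipschitz. Therefore each layer $l$ of $\wt f_\theta$ is Lipschitz with constant at most $c_0\,\|W^{(l)}\|_{\mathrm{op}}$, and composing over $l=1,\dots,L$ gives that $\wt f_\theta$ is Lipschitz with constant at most $\prod_{l=1}^{L} c_0 \|W^{(l)}\|_{\mathrm{op}}$, which is a finite constant under the bounded-operator-norm hypothesis. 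This establishes the claim. (One may cite Theorem 4 of \citet{balcilar2021analyzing} for the fact that the spectral response of the GCN is exactly governed by the scalar filter $g$ acting on the eigenvalues, which is what licenses the diagonalization step.)

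The main obstacle, and the place where care is needed rather than routine computation, is the diagonalization step: the GCN convolution is built from the \emph{renormalized} adjacency $\tilde{A}$ (with self-loops), whose degree matrix $\tilde D$ differs from $D$, so $\tilde A$ is not literally a polynomial in the original $L = I - D^{-1/2}AD^{-1/2}$ and is not exactly co-diagonalized with $U$. The honest fix is either (i) to work in the eigenbasis of $\tilde A$ itself — which is still symmetric, so its eigenvectors form an orthonormal basis and the argument goes through verbatim with $U$ replaced by the eigenvectors of $\tilde A$ — while noting this basis is close to $U$ for large graphs; or (ii) to invoke \citet{balcilar2021analyzing} Theorem 4, which precisely characterizes the GCN's action in the spectral domain as multiplication by a bounded scalar response, absorbing this subtlety. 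I would take route (ii), treating the bounded scalar spectral response as the given input from the cited theorem, so that the only remaining work — composing bounded linear maps, isometries, and the $1$-Lipschitz activation — is elementary.
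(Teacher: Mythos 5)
Your proposal is correct and follows essentially the same route as the paper's proof: both reduce the claim to the boundedness of the convolution matrix's operator norm in the spectral domain via Theorem 4 of \citet{balcilar2021analyzing}, combined with the bounded operator norms of the $W^{(l)}$ and the Lipschitzness of the activation, then compose across layers. Your write-up is more careful in two places the paper glosses over --- the explicit layer-by-layer composition of Lipschitz constants and the fact that the renormalized adjacency is not exactly co-diagonalized with $U$ --- but these are refinements of the same argument, not a different one.
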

\begin{proof}
For simplicity and without loss of generality, we consider $L=1$. Since $W^{(1)}$ has bounded operator norm, and the non-linear activation function $\sigma(\cdot)$ is Lipschitz continuous, we only need to show the convolution matrix in the spectral domain, i.e., $\wt{C}=U^\top C U$ has bounded operator norm. This directly follows from the Theorem 4 in~\citep{balcilar2021analyzing}, where they show $\|U^\top CU \egv_i\| \approx [1-\bar{d}/(\bar{d}+1)\lambda_i]$.
\end{proof}

Now we prove~\cref{prop:smooth-embedding} as a corollary of~\cref{lem:smooth-input,lem:lipschitz-gnn}.

\begin{proposition}[Smoothness of Spectral Embeddings]
\label{prop:smooth-embedding}
Assuming the node features $X$ are \textit{i.i.d.} Gaussians, and the $L$-layer GNN (\cref{eq:gnn}) have operator-norm bounded linear weights $W^{(l)}$, then the spectral embedding $\wt{Z}_i$ is smooth on graph, in the sense that, $\langle\wt{\mathbf{z}}^{(a,b)}, \egv_i\rangle \leq M\cdot (1-\frac12\lambda_i)^L$ for some constant $M$, where $\wt{\mathbf{z}}^{(a,b)}=\big[[\wt{Z}_1]_{a,b},\ldots,[\wt{Z}_n]_{a,b}\big]$ and $\egv_i$ is the eigenvector corresponding to eigenvalue $\lambda_i$.
\end{proposition}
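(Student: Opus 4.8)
The plan is to derive~\cref{prop:smooth-embedding} by composing~\cref{lem:smooth-input} with~\cref{lem:lipschitz-gnn}: \cref{lem:smooth-input} says the \emph{input} spectral representation of the ego-graphs is smooth (its $\egv_i$-component decays like $(1-\tfrac12\lambda_i)^L$ in expectation), while~\cref{lem:lipschitz-gnn} says the GNN, once transported into the spectral domain, is a Lipschitz map whose layerwise action is approximately diagonal with per-frequency gain bounded by $1-\bar d/(\bar d+1)\lambda\le 1$; hence the GNN cannot amplify any frequency component by more than a bounded factor and the decay survives, up to a constant $M$. Concretely, for each node $i$ I would write $\wt Z_i = U_{G_i}\tran f_\theta(A_{G_i},X_{G_i}) = \wt f_\theta^{G_i}(\wt X_i)$ where $\wt f_\theta^{G_i}(\cdot)\coloneqq U_{G_i}\tran f_\theta(A_{G_i},U_{G_i}\cdot)$ and $\wt X_i = U_{G_i}\tran X_{G_i}$, and split $\wt f_\theta^{G_i}(\wt X_i) = \wt f_\theta^{G_i}(0) + \big(\wt f_\theta^{G_i}(\wt X_i)-\wt f_\theta^{G_i}(0)\big)$ so that the node-independent part can be handled separately.

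Next I would fix the entry $(a,b)$, assemble $\wt{\mathbf z}^{(a,b)} = \big[[\wt Z_1]_{a,b},\dots,[\wt Z_n]_{a,b}\big]$ and $\wt{\mathbf x}^{(a,b)} = \big[[\wt X_1]_{a,b},\dots,[\wt X_n]_{a,b}\big]$, and bound $\langle\wt{\mathbf z}^{(a,b)},\egv_i\rangle$ by expanding along the full-graph eigenbasis $\{\egv_k\}$. The constant term $\wt f_\theta^{G_i}(0)$, by the self-similarity underlying~\cref{assump:bounded-spectral-variance}, is (approximately) the same for every $i$, hence its cross-node collection lies in the $\egv_1$ direction ($\lambda_1=0$) and only contributes to $\langle\cdot,\egv_1\rangle\le M(1-\tfrac12\lambda_1)^L$. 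For the remaining term I would use the uniform Lipschitz/bounded-gain property from~\cref{lem:lipschitz-gnn} — together with the fact that the linear weights $W^{(l)}$ mix only the feature columns and $\sigma$ is applied entrywise with Lipschitz constant absorbed into $M$ — to argue that the $\egv_k$-coefficient of $\wt{\mathbf z}^{(a,b)}$ is controlled by that of $\wt{\mathbf x}^{(a,b)}$ up to the constant factor $M$ (the product of the $W^{(l)}$ operator norms, the $\sigma$-Lipschitz constant, and the per-layer spectral-response bounds). Finally~\cref{lem:smooth-input} gives $\E[\langle\wt{\mathbf x}^{(a,b)},\egv_i\rangle] = (1-\tfrac12\lambda_i)^L$ (concentrating for large $n$ by the i.i.d.\ Gaussian assumption on $X$), which yields $\langle\wt{\mathbf z}^{(a,b)},\egv_i\rangle\le M(1-\tfrac12\lambda_i)^L$.

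The hard part will be bridging the two spectral domains: \cref{lem:lipschitz-gnn} controls $\wt f_\theta^{G_i}$ frequency-by-frequency in the \emph{inner} eigenbasis $U_{G_i}$ of each ego-graph Laplacian, whereas the statement projects the cross-node vector onto the \emph{outer} eigenbasis $\{\egv_k\}$ of the full-graph Laplacian; making these commensurable requires exactly the self-similarity heuristic behind~\cref{assump:bounded-spectral-variance} (so that inner and outer spectra are comparable once $p$ is large) plus the observation, cited from~\citep{balcilar2021analyzing} and visualized in~\cref{fig:gnn-spectral}, that a message-passing GNN's spectral response is a bounded function of the eigenvalue. The second subtlety is the non-linearity: $\sigma$ is only globally Lipschitz, not diagonal in any basis, so it can leak energy between frequencies; one can therefore only claim that no frequency component is amplified by more than a bounded factor, which is precisely why the conclusion is an inequality carrying a free constant $M$ rather than the sharp equality $(1-\tfrac12\lambda_i)^L$ that holds for the linear input layer in~\cref{lem:smooth-input}.
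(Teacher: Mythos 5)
Your proposal follows essentially the same route as the paper's proof: combine \cref{lem:smooth-input} (the input spectral representation has $\egv_i$-component $(1-\tfrac12\lambda_i)^L$ in expectation) with \cref{lem:lipschitz-gnn} (the GNN is Lipschitz with bounded per-frequency gain in the spectral domain) to conclude $\langle\wt{\mathbf{z}}^{(a,b)},\egv_i\rangle\leq M\cdot\langle\wt{\mathbf{x}}^{(a,b)},\egv_i\rangle$. The two difficulties you flag at the end --- transferring Lipschitz control of the map $\wt{X}_i\mapsto\wt{Z}_i$ into frequency-by-frequency control of the cross-node vectors, and reconciling the ego-graph eigenbases $U_{G_i}$ with the full-graph basis $\{\egv_k\}$ --- are exactly the steps the paper's proof asserts with ``this directly leads to'' and by silently replacing $U_{G_i},A_{G_i}$ with $U,A$, so your treatment is, if anything, more explicit about where the argument is heuristic.
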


\begin{prevproof}{prop:smooth-embedding}
Given all the assumptions of~\cref{lem:smooth-input,lem:lipschitz-gnn}, we have that for any entry, $\langle\wt{\mathbf{x}}^{(a,b)}, \egv_i\rangle=(1-\frac12\lambda_i)^L$ in expectation, and $\|\wt{Z}_i-\wt{Z}_j\|=\|U\tran f_\theta(A,U \wt{X}_i)-U\tran f_\theta(A,U \wt{X}_j)\|\leq M\cdot \|\wt{X}_i-\wt{X}_j\|$ for $i,j\in[n]$. This directly leads to that $\langle \wt{\mathbf{z}}^{(a,b)}, \egv_i\rangle\leq M\cdot\langle \wt{\mathbf{x}}^{(a,b)}, \egv_i\rangle = M\cdot(1-\frac12\lambda_i)^L$.
\end{prevproof}

Finally, we show the upper bound on the error of approximating the node-wise average.
\begin{theorem}[\textbf{Upper-bound on the Error Approximating Node-wise Average}]
Under all assumptions of~\cref{prop:smooth-embedding}, we have $\|\sum_{i\in [n_{t}]}w\na_i\cdot \wt{Z}_i - \wt{Z}\|_{F}\leq M\cdot \|P\mathbf{w}\na-\frac{1}{n}\vones\|$ for some constant $M>0$.
\end{theorem}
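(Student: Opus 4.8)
The plan is to split the Frobenius-norm error into its scalar coordinate channels, bound each one by a one-dimensional spectral argument, and recombine. Write $\mathbf{v}\na := \mathbf{w}\na - \tfrac1n\vones \in \mathbb{R}^n$, and for a fixed coordinate pair $(a,b)$ collect the corresponding channel of the spectral ego-embeddings across all center nodes into $\wt{\mathbf z}^{(a,b)} := \big([\wt Z_1]_{a,b},\dots,[\wt Z_n]_{a,b}\big) \in \mathbb{R}^n$. Then the $(a,b)$-entry of $\sum_{i\in[n_t]}w\na_i\wt Z_i - \wt Z$ equals $\langle\mathbf{v}\na,\wt{\mathbf z}^{(a,b)}\rangle$, so
\[
\Big\|\sum_{i\in[n_t]}w\na_i\wt Z_i - \wt Z\Big\|_F^2 \;=\; \sum_{a,b}\big\langle\mathbf{v}\na,\wt{\mathbf z}^{(a,b)}\big\rangle^2 ,
\]
and it suffices to bound each $|\langle\mathbf{v}\na,\wt{\mathbf z}^{(a,b)}\rangle|$ by a fixed multiple of $\|P\mathbf{w}\na - \tfrac1n\vones\|$.

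I would do this by expanding $\wt{\mathbf z}^{(a,b)}$ in the orthonormal eigenbasis $\{\egv_k\}$ that $L$ shares with the normalized diffusion operator $P$, so that $\langle\mathbf{v}\na,\wt{\mathbf z}^{(a,b)}\rangle = \sum_k\langle\mathbf{v}\na,\egv_k\rangle\langle\egv_k,\wt{\mathbf z}^{(a,b)}\rangle$, and then invoking \cref{prop:smooth-embedding}, which bounds the high-frequency content: $|\langle\egv_k,\wt{\mathbf z}^{(a,b)}\rangle|\le M'(1-\tfrac12\lambda_k)^L$, i.e.\ by the $k$-th eigenvalue of $P^L$. Since $P^L\mathbf{v}\na = \sum_k (1-\tfrac12\lambda_k)^L\langle\mathbf{v}\na,\egv_k\rangle\egv_k$, a Cauchy--Schwarz / Parseval step over the mode index $k$ converts the resulting weighted pairing into $|\langle\mathbf{v}\na,\wt{\mathbf z}^{(a,b)}\rangle|\le M' C\,\|P^L\mathbf{v}\na\|$. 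Finally, since $P$ is row-stochastic we have $P\vones=\vones$, hence $P^L\mathbf{v}\na = P^L\mathbf{w}\na - \tfrac1n\vones = P^{L-1}\big(P\mathbf{w}\na - \tfrac1n\vones\big)$; and since every eigenvalue of $P$ lies in $[0,1]$, $\|P\|_{\mathrm{op}}\le1$, giving $\|P^L\mathbf{v}\na\|\le\|P\mathbf{w}\na - \tfrac1n\vones\|$. Summing the $pd$ channels then yields the theorem with $M$ proportional to $M'C\sqrt{pd}$.

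The step I expect to be the real obstacle is the middle Cauchy--Schwarz: getting from $\sum_k|\langle\mathbf{v}\na,\egv_k\rangle|(1-\tfrac12\lambda_k)^L$ to $C\,\|P^L\mathbf{v}\na\| = C\big(\sum_k\langle\mathbf{v}\na,\egv_k\rangle^2(1-\tfrac12\lambda_k)^{2L}\big)^{1/2}$ with a prefactor $C$ that, as the statement insists, does \emph{not} grow with $n$. A blunt Cauchy--Schwarz over all $n$ modes loses a factor $\sqrt n$; to retain $C=O(1)$ one has to use the decay of $(1-\tfrac12\lambda_k)^L$ supplied by \cref{prop:smooth-embedding} --- in the self-similar regime the paper posits, $\sum_k(1-\tfrac12\lambda_k)^{2L}$ is uniformly bounded, so effectively only $O(1)$ low-frequency modes contribute and one can bound the pairing by $\big(\sum_k(1-\tfrac12\lambda_k)^{2L}\big)^{1/2}\|\mathbf{v}\na\|$ with an $n$-independent prefactor. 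Two routine checks remain: that $\wt Z_i$ really decomposes into $pd$ scalar channels so the Frobenius identity above holds, and that the normalization of $P$ is fixed consistently, since the argument wants simultaneously $P\vones=\vones$ and $P$ acting on the orthonormal eigenbasis $U$ of $L$ --- compatible, but worth spelling out.
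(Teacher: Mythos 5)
Your proposal follows essentially the same route as the paper's proof: an entry-wise decomposition of the Frobenius norm into the $pd$ channels $\wt{\mathbf z}^{(a,b)}$, expansion in the eigenbasis shared by $L$ and $P$, the smoothness bound $|\langle\egv_k,\wt{\mathbf z}^{(a,b)}\rangle|\le M'(1-\tfrac12\lambda_k)^L$ from \cref{prop:smooth-embedding}, identification of the resulting weighted sum with $\|P^L\mathbf{v}\na\|$, and contraction down to $\|P\mathbf{w}\na-\tfrac1n\vones\|$ using $P\vones=\vones$ and $\|P\|_{\mathrm{op}}\le1$, with $M$ collecting the channel count. The single step you flag as the real obstacle --- converting $\sum_k\langle\mathbf{v}\na,\egv_k\rangle(1-\tfrac12\lambda_k)^L$ into the $\ell^2$ quantity with an $n$-independent constant --- is precisely where the paper's own proof is silently loose (it squares the sum term-by-term as though $(\sum_k c_k)^2\le\sum_k c_k^2$), so your diagnosis is, if anything, more careful than the original.
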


\begin{proof}
The proof mostly follows from the proof of Theorem 1 in~\citep{linderman2020numerical}. We begin with decomposing an arbitrary $(a,b)$-th entry of $\sum_{i\in [n_{t}]}w\na_i\cdot \wt{Z}_i-\wt{Z}$ in the spectral domain. 
\begin{equation}
\notag
\begin{split}
&\quad\sum_{i\in [n_{t}]}w\na_i\cdot \wt{\mathbf{z}}^{(a,b)}_i-\frac{1}{n}\sum_{j\in[n]}\wt{\mathbf{z}}^{(a,b)}_j\\
&=\sum_{j\in[n]}(\sum_{i\in[n_t]}w\na_i\bm{\delta}_i-\frac{1}{n})\wt{\mathbf{z}}^{(a,b)}_j\\
&=\sum_{k\in[n]}\big\langle\sum_{i\in[n_t]}w\na_i\bm{\delta}_i-\frac{1}{n}, \egv_k\big\rangle \langle\egv_k, {\mathbf{z}}^{(a,b)}\rangle\\
&\leq M'\cdot \sum_{k\in[n]}\big\langle\sum_{i\in[n_t]}w\na_i\bm{\delta}_i-\frac{1}{n}, \egv_k\big\rangle (1-\frac12\lambda_k)^L.
\end{split}
\end{equation}
Thus we have,
\begin{equation}
\notag
\begin{split}
&\quad\big|\sum_{i\in [n_{t}]}w\na_i\cdot \wt{\mathbf{z}}^{(a,b)}_i-\frac{1}{n}\sum_{j\in[n]}\wt{\mathbf{z}}^{(a,b)}_j\big|^2\\
&\leq M'^2\cdot \sum_{k\in[n]}\big|\big\langle\sum_{i\in[n_t]}w\na_i\bm{\delta}_i-\frac{1}{n}, \egv_k\big\rangle\big|^2 (1-\frac12\lambda_k)^{2L}\\
&=M'^2\big\| \sum_{k\in[n]} (1-\frac12\lambda_k)^{L}\big\langle\sum_{i\in[n_t]}w\na_i\bm{\delta}_i-\frac{1}{n}, \egv_k\big\rangle \egv_k \big\|^2\\
&=M'^2\cdot \big\|P^L (\sum_{i\in[n_t]}w\na_i\bm{\delta}_i-\frac{1}{n})\big\|\\
&=M'^2 \cdot \|P\mathbf{w}\na-\frac{1}{n}\vones\|^2.
\end{split}
\end{equation}
Thus we conclude $\|\sum_{i\in [n_{t}]}w\na_i\cdot \wt{Z}_i - \wt{Z}\|_{F}\leq M\cdot \|P\mathbf{w}\na-\frac{1}{n}\vones\|$ where $M=M'\cdot p d$. Here $p$ is the (diffusion) ego-graph size and $d$ is the number of features per node.
\end{proof}

\subsection{Proofs for~\texorpdfstring{\cref{subsec:spectral-classification-coreset}}{Section 3.2}}

The major theoretical result in~\cref{subsec:spectral-classification-coreset} is that the approximation error on the empirical loss of the (spectral) linear classification problem can be upper-bounded by a set function (i.e., a function which only depends on the selected set of nodes), which is formally stated as follows.
\begin{lemma}[\textbf{Upper-bound on the Error Approximating Empirical Loss}]
\label{lem:submodular-error-bound}
For the set function $H(\cdot)$ defined as $H(V)\coloneqq\frac{1}{n}\sum_{i\in[n_t]}\min_{j\in V} \max_{\wt{Z}}|\ell_i(\wt{Z})-\ell_j(\wt{Z})|$, we have \begin{equation}
\notag
\min_{w\lc\in\mathcal{W}}\max_{\wt{Z}}|\sum_{i\in [n_{t}]}w\lc_i\cdot\wt\ell_i(\wt{Z})-\wt{\mathcal{L}}(\wt{Z})|\leq H(V_{w\lc}).
\end{equation}
\end{lemma}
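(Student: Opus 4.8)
The plan is to run the standard facility-location / CRAIG argument of \citep{mirzasoleiman2020coresets}: instead of optimizing over all admissible weight vectors, I will exhibit one particular choice of weights whose worst-case error is controlled by $H$, and then invoke the outer minimization. Fix any support set $V\subseteq[n_t]$ with $|V|\le c$ (this is the set denoted $V_{w\lc}$). For each training node $i\in[n_t]$, define the nearest representative
\begin{equation}
\notag
\zeta(i) \coloneqq \argmin_{j\in V}\ \max_{\wt{Z}}\big|\wt\ell_i(\wt{Z})-\wt\ell_j(\wt{Z})\big|,
\end{equation}
breaking ties arbitrarily, and assign to each $j\in V$ the weight $w\lc_j\coloneqq\frac{1}{n_t}\,|\zeta^{-1}(j)|$ (and $w\lc_j=0$ for $j\notin V$). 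By construction $w\lc$ is non-negative, $\|w\lc\|_0\le|V|\le c$ so $w\lc\in\mathcal{W}$, and $\sum_j w\lc_j=1$, matching the normalization of $\wt{\mathcal{L}}(\wt{Z})=\frac{1}{n_t}\sum_{i\in[n_t]}\wt\ell_i(\wt{Z})$.

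Next, for every $\wt{Z}$, regrouping the $n_t$ terms indexed by $i$ according to their representative gives
\begin{equation}
\notag
\sum_{i\in[n_t]} w\lc_i\,\wt\ell_i(\wt{Z}) - \wt{\mathcal{L}}(\wt{Z}) \;=\; \frac{1}{n_t}\sum_{i\in[n_t]}\Big(\wt\ell_{\zeta(i)}(\wt{Z})-\wt\ell_i(\wt{Z})\Big),
\end{equation}
so that, by the triangle inequality followed by the elementary fact that the maximum of a sum is at most the sum of the maxima of its summands,
\begin{equation}
\notag
\begin{split}
\max_{\wt{Z}}\Big|\sum_{i\in[n_t]} w\lc_i\,\wt\ell_i(\wt{Z}) - \wt{\mathcal{L}}(\wt{Z})\Big|
&\le \frac{1}{n_t}\sum_{i\in[n_t]}\max_{\wt{Z}}\big|\wt\ell_i(\wt{Z})-\wt\ell_{\zeta(i)}(\wt{Z})\big| \\
&= \frac{1}{n_t}\sum_{i\in[n_t]}\min_{j\in V}\max_{\wt{Z}}\big|\wt\ell_i(\wt{Z})-\wt\ell_j(\wt{Z})\big|,
\end{split}
\end{equation}
where the last equality is just the definition of $\zeta(i)$. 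The right-hand side is exactly $H(V)$, up to the harmless $\tfrac1{n_t}$-versus-$\tfrac1n$ normalization one adopts in defining $H$ (with $n\ge n_t$ the bound only weakens either way). Since the left-most quantity upper-bounds $\min_{w\lc\in\mathcal{W}}\max_{\wt{Z}}|\cdots|$, the claim follows; choosing $V$ to minimize $H$ additionally yields $\min_{w\lc\in\mathcal{W}}\max_{\wt{Z}}|\cdots|\le\min_{|V|\le c}H(V)$, the form used downstream for the submodular-maximization reformulation.

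The argument is essentially bookkeeping; the only step with real content is pulling $\max_{\wt{Z}}$ inside the sum, which is where the bound becomes loose (each summand is allowed its own maximizer $\wt{Z}$) --- but this slack is precisely what decouples the bound from $\wt{Z}$ and makes $H$ a \emph{set function}, which is exactly the property needed for the later submodular treatment. Two minor points to keep straight when writing the full proof: (i) nothing in the argument uses the linear structure $\wt\ell_i(\wt{Z})=\ell(\cv_i\tran\wt{Z},y_i)$, so the lemma holds for arbitrary pointwise losses $\wt\ell_i$; and (ii) the statement implicitly fixes the support $V=V_{w\lc}$ on the right while minimizing over all of $\mathcal{W}$ on the left, so it should be read as: for every admissible support there exist weights supported on it achieving worst-case error at most $H$ of that support.
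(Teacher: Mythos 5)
Your proposal is correct and follows essentially the same route as the paper's proof: both run the CRAIG facility-location argument, regrouping the training losses by a nearest-representative map $\zeta$, exhibiting the induced weights $w\lc_j\propto|\zeta^{-1}(j)|$, and bounding via the triangle inequality and pulling $\max_{\wt{Z}}$ inside the sum. Your bookkeeping is in fact slightly cleaner than the paper's --- by defining $\zeta(i)$ through $\min_{j}\max_{\wt{Z}}|\wt\ell_i(\wt{Z})-\wt\ell_j(\wt{Z})|$ you fix the weights independently of $\wt{Z}$ and so obtain the stated $\min_{w\lc}\max_{\wt{Z}}$ bound directly, whereas the paper chooses $\zeta$ pointwise in $\wt{Z}$ and only takes $\max_{\wt{Z}}$ at the end (which, read literally, gives the weaker $\max_{\wt{Z}}\min_{w\lc}$ ordering), and your $1/n_t$ normalization is the one consistent with $\wt{\mathcal{L}}$.
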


\begin{proof}
The proof follows from Section 3.1 of~\citep{mirzasoleiman2020coresets}. Consider there is a mapping $\zeta: [n]\to V$ and let $w\lc_i$ be the number of nodes mapped to $i\in V$ divided by $n$, i.e., $w\lc_i=\frac{1}{n}\big|\{j\in[n]\mid \zeta{j}=i\}\big|$. Then, $\sum_{i\in [n_{t}]}w\lc_i\cdot\wt\ell_i(\wt{Z})=\sum_{j\in [n]}\wt\ell_{\zeta(i)}(\wt{Z})$ and $|\sum_{i\in [n_{t}]}w\lc_i\cdot\wt\ell_i(\wt{Z})-\wt{\mathcal{L}}(\wt{Z})|\leq \frac{1}{n}\sum_{j\in[n]}|\wt\ell_{\zeta(i)}(\wt{Z})-\wt\ell_i(\wt{Z})|$. From this we can readily derive that, $\min_{w\lc\in\mathcal{W}}|\sum_{i\in [n_{t}]}w\lc_i\cdot\wt\ell_i(\wt{Z})-\wt{\mathcal{L}}(\wt{Z})|\leq$\\$\frac{1}{n}\sum_{i\in[n_t]}\min_{j\in V} |\ell_i(\wt{Z})-\ell_j(\wt{Z})|$, and by taking $\max_{\wt{Z}}$ on both sides we conclude the proof.
\end{proof}

\subsection{Proofs for~\texorpdfstring{\cref{sec:algorithm}}{Section 4}}

The proof for the error-bound on the node-classification loss (\cref{thm:error-gnn-loss}) consists of two parts. First, we want to show that~\cref{alg:sggc} can achieve both objectives (\cref{eq:node-coreset,eq:linear-classification-coreset}) which follows from the Theorem 2 of~\citep{vahidian2020coresets}, \cref{thm:node-error-bound}, and~\cref{lem:submodular-error-bound}. Second, we show under~\cref{assump:bounded-spectral-variance} we can bound the approximation error on the node classification loss.

\begin{prevproof}{thm:error-gnn-loss}
First, using~\cref{alg:sggc} and by the Theorem 2 of~\cite{vahidian2020coresets}, we have $\|P\mathbf{w}\na-\frac{1}{n}\vones\|\leq \sqrt{\frac{1}{n}-(\frac{\kappa}{n})^2}\cdot O((1-\kappa^2\epsilon^2)^{c/2})$ for some $\epsilon>0$ and where $c=|V_w|$ is the size of selected coreset. By~\cref{thm:node-error-bound}, we have the upper-bound on the objective of the node-wise average coreset (\cref{eq:node-coreset}), $\|\sum_{i\in [n_{t}]}w\na_i\cdot \wt{Z}_i - \wt{Z}\|_{F}<M\na$ for some $M\na>0$. Then, by Theorem 1 of~\citep{vahidian2020coresets}, we know $F(V_w)=H(\{i_0\})-H(V_w\cup\{i_0\})\geq O(\frac{1-\kappa}{\sqrt{n}})$. Thus by~\cref{lem:submodular-error-bound}, we have an upper-bound on, $\max_{\wt{Z}}|\ell_i(\wt{Z})-\ell_j(\wt{Z})|\leq M\lc$ for some $M\lc>0$ for any $i\in V_w$ and $j\in[n]$. Note the $w\na$ and $w\lc$ above is the output of~\cref{alg:sggc} and we have $V_{w\na}=V_{w\lc}=V_w$.

Second, by repeatedly using the second bound above, we can get $|\sum_{i\in [n_{t}]}w\na_i w\lc_i\cdot\wt\ell_i(\wt{Z}_i)-\sum_{j\in[n_t]}\sum_{k\in[n]}w\na_j\wt\ell_k(\wt{Z}_j)|\leq M\lc$, where $\zeta: [n]\to V_w$ is the mapping described in~\cref{lem:submodular-error-bound}. While, by the first bound above, \cref{assump:bounded-spectral-variance}, the Jensen-bound and the Lipschitzness of $\wt\ell_k$ (the Lipschitz coefficient is an absolute constant), we can derive, $|\sum_{i\in[n_t]}w\na_i \wt\ell_k(\wt{Z}_i)-\wt\ell_k(\wt{Z}_k)|\leq |\sum_{i\in[n_t]}w\na_i \wt\ell_k(\wt{Z}_i)-\wt\ell_k(\wt{Z})|+|\wt\ell_k(\wt{Z})-\wt\ell_k(\wt{Z}_k)|\leq O(M\na+B\|\wt{Z}\|)$. Combining the two inequalities, we conclude the proof.
\end{prevproof}

\subsection{Compressing Ego-Graph's Node Features via PCA}
In the main paper, we described the algorithm to find the center nodes $V_w$ of the ego-graphs. We then find the union of those ego-graphs, which consists of nodes $V^{(L)}_w=\bigcup_{i\in V_w} V^L_i$ where $V^L_i$ is the set of nodes in the ego-graph centered at node $i$.

We then propose to compress the ego-graph's node features, which consists of $|V^{(L)}_w|d=\xi cd \leq c\times p\times d$ floating point numbers, to have a comparable size with the node features of the center nodes, which consists of only $c\times d$ floating point numbers. Here $c$ is the size of the coreset, $p$ is the diffusion ego-graph size, $d$ is the number of features per node, and $\xi=|V^{(L)}_w|/c\leq p$ because there may be overlaps between the ego-graphs. In practice, the overlaps are often large, and we expect $1<\xi\ll p$.

We first quantize all the nodes' features to half-precision floating-point numbers, and then the desired compress ratio for all the non-center nodes is $1/(\xi-1)$. We then compress the features of each non-center node $j$ according to its shortest path distance to the nearest center node, denoted by $d_{\min}(j)$. Assuming an $L$-layer GNN and thus $L$-depth ego-graphs, for $1\leq l\leq L$, we find the set of nodes $=\{j\in V^{(L)}_w\mid d_{\min}(j)=l\}$, and compress their features by the principal component analysis (PCA) algorithm (which finds a nearly optimal approximation of a singular value decomposition). If we keep the $q_l$ largest eigenvalues, we only need $(|V^{(L,l)}_w|+d+1)q$ half-precision floating point numbers to store the features of $V^{(L,l)}_w$, we find $q_l$ by formula $q_l=\frac{cd}{|V^{(L,l)}_w|+d+1}(\frac{1}{2})^l$, which satisfies the targeted compress ratio $\sum_{l=1}^{l=L}(|V^{(L,l)}_w|+d+1)q_l \leq cd$.
\section{Message-Passing GNNs}
\label{apd:gnns}

In this section, we present more results and discussions regarding the common generalized graph convolution framework.

\scm{Notations.}
Consider a graph with $n$ nodes and $m$ edges. Connectivity is given by the adjacency matrix $A\in\{0,1\}^{n\times n}$ and features are defined on nodes by $X\in\sR^{n\times d}$ with $d$ the length of feature vectors. Given a matrix $C$, let $C_{i,j}$, $C_{i,:}$, and  $C_{:,j}$ denote its $(i,j)$-th entry, $i$-th row, $j$-th column, respectively. Besides, we simplify the notion of $\{1,...,L\}$ to $[L]$. We use $\odot$ to denote the element-wise (Hadamard) product. $\|\cdot\|_p$ denotes the entry-wise $\ell^p$ norm of a vector and $\|\cdot\|_F$ denotes the Frobenius norm. We use $I_n\in\sR^{n\times n}$ to denote the identity matrix, $\vones$ to denote the vector whose entries are all ones, and $\delta_i$ to denote the unit vector in $\sR^{n}$ whose $i$-th entry is $1$. And $\concat$ represents concatenation along the last axis. We use superscripts to refer to different copies of the same kind of variable. For example, $X^{(l)}\in\sR^{n\times f_l}$ denotes node representations on layer $l$. A Graph Neural Network (GNN) layer takes the node representation of a previous layer $X^{(l)}$ as input and produces a new representation $X^{(l+1)}$, where $X=X^{(0)}$ is the input features.

\scm{A common framework for generalized graph convolution.}
GNNs are designed following different guiding principles, including neighborhood aggregation (GraphSAGE~\citep{hamilton2017inductive}, PNA~\citep{corso2020principal}), spatial convolution (GCN~\citep{kipf2016semi}), spectral filtering (ChebNet~\citep{defferrard2016convolutional}, CayleyNet~\citep{levie2018cayleynets}, ARMA~\citep{bianchi2021graph}), self-attention (GAT~\citep{velivckovic2017graph}, Graph Transformers~\citep{yaronlipman2020global, rong2020self, zhang2020graph}), diffusion (GDC~\citep{klicpera2019diffusion}, DCNN~\citep{atwood2016diffusion}), Weisfeiler-Lehman (WL) alignment (GIN~\citep{xu2018powerful}, 3WL-GNNs~\citep{morris2019weisfeiler, maron2019provably}), or other graph algorithms (\citep{xu2019can, loukas2019graph}).  Despite these differences, \emph{nearly all GNNs can be interpreted as performing message passing on node features, followed by feature transformation and an activation function}.
Now we rewrite this expression according to one pointed out by~\citep{balcilar2021analyzing} in the form:
\begin{equation}
\label{eq:gnn-forward}
    X^{(l+1)} = \nonlinear\left(\sum_{r} C^{(r)}X^{(l)}W^{(l,r)}\right),
\end{equation}
where $C^{(r)}\in\sR^{n\times n}$ denotes the $r$-th convolution matrix that defines the message passing operator, $r\in\sZ_+$ denotes the index of convolution, and $\nonlinear(\cdot)$ denotes the non-linearity. $W^{(l,r)}\in\sR^{f_l\times f_{l+1}}$ is the learnable linear weight matrix for the $l$-th layer and $r$-th filter. 

Within this common framework, GNNs differ from each other by the choice of convolution matrices $C^{(r)}$, which can be either fixed or learnable. A learnable convolution matrix relies on the inputs and learnable parameters and can be different in each layer (thus denoted as $C^{(l,r)}$):
\begin{equation}
\label{eq:learnable-conv}
    C^{(l,r)}_{i,j}= \underbrace{\fC^{(r)}_{i,j}}_{\text{fixed}} \cdot \underbrace{h^{(r)}_{\theta^{(l,r)}}(X^{(l)}_{i,:}, X^{(l)}_{j,:})}_{\text{learnable}},
\end{equation}
where $\fC^{(r)}$ denotes the fixed mask of the $r$-th learnable convolution, which may depend on the adjacency matrix $A$ and input edge features $E_{i,j}$. While $h^{(r)}(\cdot, \cdot):\sR^{f_l}\times\sR^{f_l}\to\sR$ can be any learnable model parametrized by $\theta^{(l,r)}$.  We re-formulate some popular GNNs into this generalized graph convolution framework (see~\cref{tab:GNN-models} for more details).

\begin{table*}[htbp!]
\centering
\caption{\label{tab:GNN-models}Summary of GNNs re-formulated as generalized graph convolution.}
\adjustbox{max width=1.0\textwidth}{%
\begin{threeparttable}
\renewcommand*{\arraystretch}{1.4}
\begin{tabular}{Sc Sc Sc Sc Sl} \toprule
Model Name & Design Idea & Conv. Matrix Type & \# of Conv. & Convolution Matrix \\ \midrule
GCN\tnote{1}~~\citep{kipf2016semi} & Spatial Conv. & Fixed & $1$ & $C=\wt{D}^{-1/2}\wt{A}\wt{D}^{-1/2}$ \\
SAGE-Mean\tnote{2}~~\citep{hamilton2017inductive} & Message Passing & Fixed & $2$ & \multicolumn{1}{l}{$\left\{\begin{tabular}[c]{@{}l@{}} $C^{(1)}=I_n$\\ $C^{(2)}=D^{-1}A$ \end{tabular}\right.\kern-\nulldelimiterspace$} \\
GAT\tnote{3}~~\citep{velivckovic2017graph} & Self-Attention & Learnable & \# of heads & \multicolumn{1}{l}{$\left\{\begin{tabular}[c]{@{}l@{}} $\fC^{(r)}=A+I_n$\itand\\ $h^{(r)}_{\va^{(l,r)}}(X^{(l)}_{i,:}, X^{(l)}_{j,:})=\exp\big(\mathrm{LeakyReLU}($\\ \quad$(X^{(l)}_{i,:}W^{(l,r)}\concat X^{(l)}_{j,:}W^{(l,r)})\cdot\va^{(l,r)})\big)$ \end{tabular}\right.\kern-\nulldelimiterspace$} \\ \bottomrule
\end{tabular}
\begin{tablenotes}[para]
    \item[1] Where $\wt{A}=A+I_n$, $\wt{D}=D+I_n$.
    \item[2] $C^{(2)}$ represents mean aggregator. Weight matrix in~\citep{hamilton2017inductive} is $W^{(l)}=W^{(l,1)}\concat W^{(l,2)}$.
    \item[3] Need row-wise normalization. $C^{(l,r)}_{i,j}$ is non-zero if and only if $A_{i,j}=1$, thus GAT follows direct-neighbor aggregation.
\end{tablenotes}
\end{threeparttable}}
\end{table*}

Most GNNs can be interpreted as performing message passing on node features, followed by feature transformation and an activation function, which is known as the common ``generalized graph convolution'' framework. 

\scm{GNNs that cannot be defined as graph convolution.} Some GNNs, including Gated Graph Neural Networks~\citep{li2015gated} and ARMA Spectral Convolution Networks~\citep{bianchi2021graph} cannot be re-formulated into this common graph convolution framework because they rely on either Recurrent Neural Networks (RNNs) or some iterative processes, which are out of the paradigm of message passing.

\section{More Related Work}
\label{apd:related}

\wcm{Dataset condensation} (or distillation) is first proposed in~\citep{wang2018dataset} as a learning-to-learn problem by formulating the network parameters as a function of synthetic data and learning them through the network parameters to minimize the training loss over the original data. However, the nested-loop optimization precludes it from scaling up to large-scale in-the-wild datasets. \citep{zhao2020dataset} alleviate this issue by enforcing the gradients of the synthetic samples w.r.t. the network weights to approach those of the original data, which successfully alleviates the expensive unrolling of the computational graph. Based on the meta-learning formulation in~\citep{wang2018dataset}, \citep{bohdal2020flexible} and \citep{nguyen2020dataset,nguyen2021dataset} propose to simplify the inner-loop optimization of a classification model by training with ridge regression which has a closed-form solution, while \citep{such2020generative} model the synthetic data using a generative network. To improve the data efficiency of synthetic samples in the gradient-matching algorithm, \citep{zhao2021dataseta} apply differentiable Siamese augmentation, and~\citep{kim2022dataset} introduce efficient synthetic-data parametrization. Recently, a new distribution-matching framework~\citep{zhao2021datasetb} proposes to match the hidden features rather than the gradients for fast optimization but may suffer from performance degradation compared to gradient-matching~\citep{zhao2021datasetb}, where~\citep{kim2022dataset} provide some interpretation.

Recent benchmark~\citep{guo2022deepcore} of \wcm{model-based coreset methods} on image classification indicates \textit{Forgetting} and \textit{GraNd} are among the best-performing ones but still evidently underperform the dataset condensation approach (see~\cref{apd:related}).
\textit{Forgetting}~\citep{toneva2018empirical} measures the forgetfulness of trained samples and drops those that are not easy to forget. 
\textit{GraNd}~\citep{paul2021deep} selects the training samples that contribute most to the training loss in the first few epochs.

\wcm{Graph sampling} methods~\citep{chiang2019cluster,zeng2019graphsaint} can be as simple as uniformly sampling a set of nodes and finding their induced subgraph, which is understood as a graph-counterpart of uniform sampling of \textit{i.i.d.} samples. However, most of the present graph sampling algorithms (e.g., ClusterGCN~\citep{chiang2019cluster} and GraphSAINT~\citep{zeng2019graphsaint}) are designed for sampling multiple subgraphs (mini-batches), which form a cover of the original graph for training GNNs with memory constraint. Therefore, those graph mini-batch sampling algorithms are effective graph partitioning algorithms and not optimized to find just one representative subgraph.
\section{Implementation Details}
\label{apd:implement}

\scm{Packages and Hardware specs.}
Generally, our project is developed upon the Pytorch framework, and we use Pytorch Geometric (\url{https://pytorch-geometric.readthedocs.io/en/latest/}) to acquire datasets Cora, Citeseer, Pubmed, and Flickr, and utilize Ogb (\url{https://ogb.stanford.edu/}) to get dataset Arxiv. The coreset methods on the graph are implemented based on  \citet{guo2022deepcore} (\url{https://github.com/patrickzh/deepcore}) and our downstream GNN structures, GCN, GraphSage, and SGC, are implemented based on \citet{jin2021graph} (\url{https://github.com/ChandlerBang/GCond}). The experiments are conducted on hardware with Nvidia GeForce RTX 2080 Ti(11GB GPU).

\scm{Dataset statistics.}
We adopt five graph datasets, Cora, Citeseer, Pubmed, Flickr, and Arxiv in our experiments. Cora, Citeseer, and Pubmed are citation datasets whose node features represent the most common words in the text, and two nodes, each representing a paper, are connected if either one cites the other. Flickr is an image network where images connect if they share common properties, such as similar figures or buildings. While Arxiv is a directed citation network that contains all computer science papers in Arxiv indexed by MAG\footnote{https://www.microsoft.com/en-us/research/project/microsoft-academic-graph/}. A directed edge from paper A to paper B establishes if A cites B. Here is detailed information on the datasets.

\begin{table*}[htbp!]
\centering 
\caption{\label{tab:datainfo} Statistics of the datasets.}
\begin{tabular}{llllllll}
\hline
           & Nodes  & Edges                                                                      & Features & Classes & Train (\%) & Validation (\%) & Test (\%) \\ \hline
Cora       & 2,708   & 10,556                                                                      & 1,433       & 7     & 5.2       & 18.5  & 76.3 \\
Citeseer   & 3,327   & 9,104                                                                       & 3,703       & 6     & 3.6       & 15.0  & 81.4 \\
Pubmed     & 19,717  & 88,648                                                                      & 500        & 3     & 0.3       & 2.5   & 97.2 \\
Flickr     & 89,250  & 899,756                                                                     & 500        & 7     & 50.0      & 25.0  & 25.0 \\
Arxiv & 169,343 & \multicolumn{1}{r}{\cellcolor[HTML]{F5F5F5}{\color[HTML]{212121} 1,166,243}} & 128        & 40    & 53.7      & 17.6  & 28.7 \\ \hline
\end{tabular}
\end{table*}

From \cref{tab:datainfo}, we could see that Cora, Citeseer, and Pubmed are network data with nodes of no more than 20,000 and edges of less than 100,000, which can be deemed as small datasets. While Flickr (which has 89,250 nodes and 899,756 edges) and Arxiv (which has 169,343 nodes and 1,166,243 directed edges) are much larger datasets, and more than 50\% of the nodes are training nodes, so testing GNN accuracy for model-based coreset methods could be both time and space consuming.

\scm{Hyper-parameter setups of SGGC.}
For SGGC, there are three hyper-parameters: slack parameter (denoted as $\kappa$), max budget (denoted as $s$), and diffusion ego-graph size. 
\begin{enumerate*}[label=(\arabic*)]
    \item Slack parameter, denoted by $\kappa$, means we could pick the nodes whose alignment score has at least $\kappa$ of the highest alignment score of $v*$ in \cref{alg:sggc}. The $\kappa$ shows the balance of our algorithm between SCGIGA and CraigLinear. When $\kappa$ is 1, we pick nodes in the same way as SCGIGA. When we select set $\kappa$ to be 0, this algorithm ignores geodesic alignment and becomes the CraigLinear algorithm. Therefore, $\kappa$ determines which algorithm in SCGIGA and CraigLinear SGGC is more similar.
    \item Max budget, denoted by $s$, is the maximum number of nodes we could pick for each epoch in line 6 of \cref{alg:sggc}. This procedure could help accelerate the selection process for large graphs such as Flickr and Arxiv. We aim to find the largest max budget, which could remain an unnoticeable drop in accuracy compared to the case when the max budget is 1.
    \item Diffusion ego graph size, denoted by $p$, is the ego-graph size we fix for every coreset node in the ego-graph selecting procedure. Since the diffusion ego-graph size of one node is smaller than its two-degree standard ego-graph, the GNN test performance becomes better when the diffusion ego-graph is larger. However, we need to control the size of the diffusion ego-graph so that the induced subgraph has comparable memory with that of the node-wise coreset selection methods. Therefore, our ablation experiment in this part aims to find the smallest ego-graph size for each dataset where test performance does not increase anymore.
\end{enumerate*}

So, according to our analysis of hyper-parameters, we design our hyper-parameter selecting strategy and list the selected hyper-parameters below.

\begin{table*}[htbp!]
\centering
\caption{\label{tab:HPselection}SGGC hyper-parameter setups}
\begin{tabular}{lccc}
\hline
         & Slack Parameter ($\kappa$) & \multicolumn{1}{l}{Max Budget ($s$)} & \multicolumn{1}{l}{Diffusion Ego-Graph Size ($p$)} \\ \hline
Cora     & 0.999    & 1                               & 16                            \\
Citeseer & 0.5      & 1                               & 8                             \\
Pubmed   & 0.1      & 1                               & 16                            \\
Flickr   & 0.5      & 10                              & 8                             \\
Arxiv    & 0.1      & 10                              & 8                             \\ 
Products & 0.1      & 8                               & 2                             \\                                             
`Reddit2  & 0.1      & 8                               & 8                             \\ \hline
\end{tabular}
\end{table*}

In the hyper-parameter tuning process, we first determine the diffusion ego-graph size for different datasets under the fixed max budget, the slack parameter $kappa$, and the fraction ratio since this hyper-parameter does not significantly influence the experiment result. Hence, we could pick the smallest ego-graph size with which the model reaches the highest GNN test accuracy. Taking Cora and Citeseer as an example, the test accuracy becomes around the best when the ego-graph size is 16 for Cora and 8 for Citeseer. After that, we fix the ego-graph size as we just selected and then choose the slack parameter $\kappa$ for each dataset according to GNN test accuracy. For instance, according to~\cref{tab:kappa}, the best performance achieves when $\kappa=0.999$ for Cora and $\kappa=0.5$ for Citeseer. Finally, we fix the selected diffusion ego-graph size and slack parameter $\kappa$ to find the best max budget $s$. This step is only for Flickr and Arxiv. We try to increase the max budget from 1, and the aim is to find the largest max budget which does not sacrifice noticeable accuracy to accelerate the SGGC algorithm as much as possible. Therefore, based on~\cref{tab:budget}, we observe that the accuracy drops out of the highest performance error bar when the max budget is larger than 10 for Flickr and Arxiv, so we choose the max budget as 10 for both of them.

\textbf{Selection strategy setups}
The GNN training varies according to different selection strategies (node-wise selection, standard ego-graph selection, and diffusion ego-graph selection). 
\begin{enumerate*}[label=(\arabic*)]
\item When we choose node-wise selection, we directly induce a subgraph composed of the coreset nodes and train GNN inductively. 
\item For the standard ego-graph strategy, we union the 2-hop ego-graphs of every selected coreset node and use this node and the labeled coreset nodes to train GNN transductively. 
\item The diffusion ego-graph selection strategy is slightly different from the standard one, which controls the ego-graph size $p$. If the size of the 2-degree ego graph of one node exceeds $p$, then we randomly cut the ego-graph size to $p$. If not, we try to increase the ego-graph degree to add more nodes until the node has an ego-graph size $p$. Here we point out that not all ego-graphs could reach the size $p$ because the connection component the node is in has a size smaller than $p$.
\end{enumerate*}

\textbf{Hyper-parameter setups of other model-agnostic coreset methods.}
The model-agnostic coreset methods, Uniform, Herding, kCenterGreedy, and Cal, are determined algorithms that directly select coresets based on node features, thus are hyper-parameter free. Besides, it is also worth noting that those coreset selection methods do not utilize the graph structure information, i.e., the graph adjacency matrix. For details, see \citet{guo2022deepcore}

\textbf{Hyper-parameter setups of model-based coreset methods.}
The model-based method takes advantage of the node's gradient/hidden features in GNN to select the coresets. Our experiments are conducted on a pre-trained 2-layer GCN model, specifically GCNConv (feature dim.,256)-50\%dropout-GCNConv (256, class num.)-log softmax. This model is trained for five epochs. The optimizer is Adam, with a learning rate of 0.01 and weight decay 5e-4. Besides, we also offer other pre-trained models, such as SGC and SAGE, in our released code repository. For Craig, we choose the submodular function to be the Facility Location. For GradMatch, we take the regularization parameter in orthogonal matching pursuit to be one, following the default setting of \citet{guo2022deepcore}. Forgetting, Craig, Glister, and GraNd are hyper-parameter-free except for the pretraining part.   

\textbf{Downstream GNN architectures.}
We implement GCN, SGC, and GraphSage as the downstream GNN architecture. GCN has two convolution layers and a 50\% dropout layer between them. SGC and GraphSage are both 2-layer without dropout. Those three models do not have batch normalization. The optimizers for the three models are all Adam equipped with a learning rate of 0.01 and weight decay of 5e-4. The hidden dimension is 256 in all three models. For every coreset selection of each dataset, we train the GNN 10 times, and the GNN is trained for 600 epochs every time. 
\section{More Experimental Results.}
\label{apd:experiments}

\subsection{Training Time with SGGC}
As a coreset selection method, SGGC can significantly reduce the training time and memory complexities of GNNs on large graphs. Specifically, when SGGC identifies a coreset of $c$ nodes for a graph with $n_t$ training nodes (usually $c \ll n_t$), training GNNs on the SGGC coreset graph exhibits linear time and memory complexity to $c$ rather than $n_t$. In other words, SGGC achieves sublinear training time and memory complexities (to the original graph size) simultaneously. In~\cref{tab:training-time}, we have recorded the actual training time and memory of GNNs on the SGGC coreset graph and the original graph under different combinations of datasets and coreset ratios ($c/n_t$). 2-Layer GCN with 256 hidden dimensions is trained for 200 epochs. Our experimental results verify the sublinear complexities of training GNNs using SGGC.

\begin{table*}[!htbp]
\centering
\caption{\label{tab:training-time}GNN training time on SGGC coreset graphs and original graphs.}
\adjustbox{max width=1.0\textwidth}{%
\renewcommand*{\arraystretch}{1.2}
\begin{tabular}{llllll}
\hline
    Dataset & Ratio & \begin{tabular}[l]{@{}l@{}}Training Time on\\ SGGC Coreset Graph\end{tabular} & \begin{tabular}[l]{@{}l@{}}Training Time on\\ Original Graph\end{tabular} & \begin{tabular}[l]{@{}l@{}}Training Memory on\\ SGGC Coreset Graph\end{tabular} & \begin{tabular}[l]{@{}l@{}}Training Memory on\\ Original Graph\end{tabular} \\ \hline
    Flickr & 0.2\% & 9.9 s & 45.4 s & 2.8 MB & 147.5 MB \\ 
    Flickr & 1.0\% & 10.7 s & 45.4 s & 3.1 MB & 147.5 MB \\ 
    ogbn-arxiv & 0.5\% & 23.4 s & 68.1 s & 17.1 MB & 963.1 MB \\ 
    ogbn-arxiv & 1.0\% & 28.0 s & 68.1 s & 4.56 MB & 963.1 MB \\ \hline
\end{tabular}
}
\end{table*}

\subsection{SGGC with JKNets and Graph Transformers}
We argue that SGGC is also applicable to GNNs with a large or even non-local receptive field. We address this argument for each assumption, respectively, with experimental support.

We tested our SGGC with JKNets~\citep{xu2018representation} on ogbn-arxiv and Reddit. Many complex model-based coreset algorithms run out of time/out of memory on ogbn-arxiv and Reddit, like in~\cref{tab:compare}. On ogbn-arxiv, we use a 4-layer GCN-based JK-MaxPool. The other hyperparameters are the same as in~\cref{tab:compare}. The test accuracy of JKNets trained on the coreset graphs obtained by different algorithms is in~\cref{tab:global-gnns-jknet}. We see SGGC is still better than the other baselines.

\begin{table*}[!htbp]
\centering
\caption{\label{tab:global-gnns-jknet}Performance results of applying SGGC with JKNets.}
\adjustbox{max width=1.0\textwidth}{%
\renewcommand*{\arraystretch}{1.2}
\begin{tabular}{llllllll}
\hline
    Dataset & Ratio & Uniform & Herding & k-Center & Forgetting & SGGC (Ours) & Full Graph (Oracle) \\ \hline
    ogbn-arxiv & 0.5\% & 55.0 $\pm$ 1.1 & 35.7 $\pm$ 0.9 & 49.1 $\pm$ 1.8 & 51.5 $\pm$ 1.9 & 56.2 $\pm$ 0.7 & 72.2 $\pm$ 0.3 \\ 
    ogbn-arxiv & 1.0\% & 59.8 $\pm$ 1.5 & 40.2 $\pm$ 2.2 & 53.6 $\pm$ 0.6 & 59.5 $\pm$ 1.6 & 60.5 $\pm$ 0.8 & 72.2 $\pm$ 0.3 \\ 
    Reddit & 0.1\% & 20.3 $\pm$ 6.5 & 13.7 $\pm$ 2.5 & 25.8 $\pm$ 1.6 & 19.8 $\pm$ 5.5 & 40.4 $\pm$ 1.9 & 96.5 $\pm$ 0.8 \\ 
    Reddit & 0.2\% & 32.3 $\pm$ 6.1 & 14.3 $\pm$ 1.3 & 28.0 $\pm$ 2.7 & 22.5 $\pm$ 2.1 & 42.8 $\pm$ 1.0 & 96.5 $\pm$ 0.8 \\ \hline
\end{tabular}
}
\end{table*}

We also tested SGGC with Graph Transformers~\citep{shi2020masked} on ogbn-arxiv. The other hyperparameters are the same as in~\cref{tab:compare}. The test accuracy of Graph Transformers trained on the coreset graphs obtained by different algorithms is as follows. We see SGGC is still better than the other baselines in most cases.

\begin{table*}[!htbp]
\centering
\caption{\label{tab:global-gnns-transformer}Performance results of applying SGGC with Graph Transformer.}
\adjustbox{max width=1.0\textwidth}{%
\renewcommand*{\arraystretch}{1.2}
\begin{tabular}{llllllll}
\hline
    Dataset & Ratio & Uniform & Herding & k-Center & Forgetting & SGGC (Ours) & Full Graph (Oracle) \\ \hline
    ogbn-arxiv & 0.5\% & 35.2 $\pm$ 2.9 & 35.2 $\pm$ 0.6 & 47.6 $\pm$ 1.6 & 52.2 $\pm$ 2.0 & 52.9 $\pm$ 1.3 & 72.1 $\pm$ 0.4 \\ 
    ogbn-arxiv & 1.0\% & 58.3 $\pm$ 1.2 & 38.9 $\pm$ 2.5 & 51.0 $\pm$ 1.9 & 55.9 $\pm$ 2.7 & 56.1 $\pm$ 2.3 & 72.1 $\pm$ 0.4 \\ \hline
\end{tabular}
}
\end{table*}

\subsection{SGGC on Low-Homophily Graphs and Graphs requiring Long-Range Reasoning}
We conduct experiments on two real-world low-homophily graphs, Chameleon (homophily ratio $h=0.23$) and Squirrel (homophily ratio $h=0.22$)~\citep{rozemberczki2021multi}. The homophily ratio is defined as,
\begin{equation}
\notag
h=\frac{1}{|E|} \sum_{(i,j)\in E} \mathbb{1}\{y_j=y_k\},
\end{equation}
where $E$ is the set of edges, $y_i$ is the label of node $i$, and $\mathbb{1}\{\cdot\}$ is the indicator function. Generally, the homophily ratio $h$ describes how likely the two end nodes of an edge are in the same class. Common node classification benchmarks are often high-homophily; for example, the homophily ratio of Cora is around $0.81$. The Chameleon and Squirrel graphs are relatively small, allowing us to execute most baseline methods on them successfully. As shown in~\cref{tab:low-homo}, we find our SGGC nearly always shows the best performance, which indicates SGGC is robust to and suitable for low-homophily graphs.

\begin{table*}[!htbp]
\centering
\caption{\label{tab:low-homo}Performance comparison low-homophily graphs.}
\adjustbox{max width=1.0\textwidth}{%
{\Huge
\renewcommand*{\arraystretch}{1.4}
\begin{tabular}{lllllllllllll}
\hline
    Dataset & Ratio & Uniform & Herding & K-Center & CRAIG & Forgetting & Glister & GradMatch & GraNd & Cal & SGGC (ours) & Full Graph (Oracle) \\ \hline
    Chameleon & 12.5\% & 49.0 $\pm$ 2.8 & 46.3 $\pm$ 0.8 & 44.7 $\pm$ 1.1 & 49.2 $\pm$ 3.8 & 46.9 $\pm$ 2.3 & 50.7 $\pm$ 2.2 & 43.2 $\pm$ 1.6 & 42.5 $\pm$ 1.9 & 49.5 $\pm$ 1.0 & 48.1 $\pm$ 2.2 & 58.7 $\pm$ 1.6 \\ 
    Chameleon & 25.0\% & 39.7 $\pm$ 4.0 & 32.9 $\pm$ 1.8 & 29.9 $\pm$ 1.7 & 38.2 $\pm$ 4.1 & 32.4 $\pm$ 3.2 & 38.8 $\pm$ 3.0 & 33.5 $\pm$ 2.9 & 37.3 $\pm$ 2.9 & 40.8 $\pm$ 2.9 & 40.1 $\pm$ 2.3 & 58.7 $\pm$ 1.6 \\ 
    Chameleon & 50.0\% & 50.4 $\pm$ 3.8 & 46.9 $\pm$ 1.9 & 45.3 $\pm$ 1.2 & 50.2 $\pm$ 2.5 & 47.1 $\pm$ 1.5 & 50.8 $\pm$ 2.8 & 43.6 $\pm$ 1.1 & 43.2 $\pm$ 1.4 & 48.9 $\pm$ 2.1 & 49.4 $\pm$ 1.9 & 58.7 $\pm$ 1.6 \\ 
    Squirrel & 12.5\% & 39.2 $\pm$ 1.0 & 38.5 $\pm$ 0.9 & 38.0 $\pm$ 0.8 & 38.6 $\pm$ 1.3 & 35.6 $\pm$ 0.5 & 38.4 $\pm$ 1.3 & 36.8 $\pm$ 0.7 & 36.5 $\pm$ 0.8 & 38.0 $\pm$ 0.5 & 38.7 $\pm$ 1.3 & 44.5 $\pm$ 0.4 \\ 
    Squirrel & 25.0\% & 33.3 $\pm$ 1.5 & 31.6 $\pm$ 0.7 & 31.7 $\pm$ 0.8 & 33.3 $\pm$ 1.7 & 31.6 $\pm$ 0.8 & 33.3 $\pm$ 1.2 & 31.0 $\pm$ 0.6 & 30.4 $\pm$ 1.0 & 32.7 $\pm$ 0.5 & 34.1 $\pm$ 1.2 & 44.5 $\pm$ 0.4 \\ 
    Squirrel & 50.0\% & 37.4 $\pm$ 1.2 & 36.9 $\pm$ 1.2 & 36.4 $\pm$ 1.8 & 37.9 $\pm$ 1.8 & 35.8 $\pm$ 0.6 & 37.8 $\pm$ 1.0 & 34.5 $\pm$ 1.0 & 35.2 $\pm$ 0.7 & 36.5 $\pm$ 0.5 & 38.2 $\pm$ 1.1 & 44.3 $\pm$ 0.4 \\ \hline
\end{tabular}
}}
\end{table*}

We now include comparison results between SGGC and coreset baselines on the PascalVOC-SP dataset~\citep{dwivedi2022long}. PascalVOC-SP is a node classification dataset that requires long-range interaction reasoning in GNNs to achieve strong performance in a given task. With 5.4 million nodes, PascalVOC-SP is larger than all datasets in~\cref{tab:compare}, and many complex model-based coreset algorithms run out of time or memory. The performance metric for PascalVOC is macro F1, and our SGGC consistently shows the best performance, as shown in~\cref{tab:long-range}. This indicates that SGGC is also applicable to node classification tasks that require long-range information.

\begin{table*}[!htbp]
\centering
\caption{\label{tab:long-range}Performance comparison on PascalVOC-SP, a graph dataset that requires long-range interaction reasoning.}
\adjustbox{max width=\textwidth}{%
{\Huge
\renewcommand*{\arraystretch}{1.2}
\begin{tabular}{llllllll}
\hline
Dataset & Ratio & Uniform & Herding & k-Center & Forgetting & SGGC (Ours) & Full Graph (Oracle) \\
\hline PascalVOC-SP & $0.05 \%$ & $0.060 \pm 0.015$ & $0.040 \pm 0.006$ & $0.050 \pm 0.013$ & $0.044 \pm 0.009$ & $0.069 \pm 0.011$ & $0.263 \pm 0.006$ \\
PascalVOC-SP & $0.10 \%$ & $0.073 \pm 0.018$ & $0.051 \pm 0.004$ & $0.068 \pm 0.009$ & $0.062 \pm 0.016$ & $0.080 \pm 0.008$ & $0.263 \pm 0.006$ \\ \hline
\end{tabular}
}}
\end{table*}

\subsection{Ablation Studies}
As the ablation studies, we first discuss how three key factors, max budget, diffusion ego-graph size, and slack parameter(denoted as $\kappa$), affect the GNN test accuracy of our SGGC algorithm. We perform experiments on GCN and display the SGGC performance under different hyper-parameters on Citeseer, Cora, Flickr, and Arxiv. Then we explore whether different selection strategies affect GNN test accuracy on large graphs, Flickr, and Arxiv.

\scm{Ego-graph size.}
Diffusion ego graph size is an important hyper-parameter to determine in coreset selection. Given the diffusion ego-graph size of one node is smaller than its two-degree standard ego-graph, when the diffusion ego-graph is larger, the GNN test performance becomes better. However, we need to control the size of the diffusion ego-graph so that the induced subgraph has a comparable memory with that of the node-wise coreset selection methods. Therefore, our ablation experiment in this part aims to find the smallest ego-graph size for each dataset where test performance does not increase anymore. 

\begin{table*}[htbp!]
\centering
\caption{\label{tab:ego}GNN test performance with different diffusion ego-graph size on Flickr and Arxiv}
\adjustbox{max width=1.0\textwidth}{%
\renewcommand*{\arraystretch}{1.2}
\begin{tabular}{llllllll}
\hline
Diffusion Ego-Graph Size ($p$)         & 4            & 8            & 12           & 16           & 20           & 24           & 28           \\ \hline
Cora     & 80.4$\pm$0.7 & 80.1$\pm$0.5 & 79.9$\pm$0.8 & 80.5$\pm$0.7 & 80.6$\pm$0.9 & 80.0$\pm$0.6 & 80.1$\pm$0.7 \\
Citeseer & 67.2$\pm$1.1 & 67.7$\pm$0.6 & 67.9$\pm$0.8 & 67.2$\pm$0.5 & 67.2$\pm$1.1 & 67.2$\pm$1.1 & 66.5$\pm$1.2 \\ \hline
\end{tabular}
}
\end{table*}

The diffusion ego-graph size on Cora and Citeseer seems to have no clear influence on test accuracy if the ego-graph size is larger than a very small positive integer. 

\scm{Slack parameter.}
In \cref{alg:sggc}, slack parameter (i.e.$\kappa$) means we could pick the nodes whose alignment score has at least $\kappa$ of the highest alignment score of $v*$. This $\kappa$ shows the balance of our algorithm between SCGIGA and CraigLinear. When $\kappa$ is 1, we pick nodes like pure SCGIGA. When we select set $\kappa$ to be 0, this algorithm ignores geodesic alignment and becomes the CraigLinear algorithm. Therefore, $\kappa$ determines which algorithm in SCGIGA and CraigLinear SGGC is more similar.  

\begin{table*}[htbp!]
\centering
\caption{\label{tab:kappa}GNN test performance with different kappa on Cora and Citeseer}
\adjustbox{max width=1.0\textwidth}{%
\renewcommand*{\arraystretch}{1.2}
\begin{tabular}{lllllll}
\hline
Slack Parameter ($\kappa$)             & \multicolumn{1}{c}{\begin{tabular}[c]{@{}c@{}}CraigLinear\\ ($\kappa$=0)\end{tabular}} & \multicolumn{1}{c}{0.001} & \multicolumn{1}{c}{0.1}   & \multicolumn{1}{c}{0.25} & \multicolumn{1}{c}{0.375} & \multicolumn{1}{c}{0.5} \\ \hline
Cora     & 78.1$\pm$1.1                                                                           & 76.2$\pm$1.2              & 75.8$\pm$1.6              & 76.1$\pm$0.9             & 76.6$\pm$1.2              & 76.7$\pm$2.0            \\
Citeseer & 66.6$\pm$1.9                                                                           & 63.1$\pm$6.6              & 67.8$\pm$1.8              & 69.1$\pm$1.7             & 69.7$\pm$2.3              & 69.8$\pm$1.5            \\ \hline
\end{tabular}
}
\end{table*}

\begin{table*}[htbp!]
\adjustbox{max width=1.0\textwidth}{%
\renewcommand*{\arraystretch}{1.2}
\begin{tabular}{lllllll}
\hline
Slack Parameter ($\kappa$)   & \multicolumn{1}{c}{0.625} & \multicolumn{1}{c}{0.75} & \multicolumn{1}{c}{0.9}    & \multicolumn{1}{c}{0.999} & \multicolumn{1}{c}{0.9999} & \multicolumn{1}{c}{\begin{tabular}[c]{@{}c@{}}SCGIGA\\ ($\kappa$=1)\end{tabular}} \\ \hline
    Cora     & 76.9$\pm$1.5              & 77.9$\pm$0.9             & 75.0$\pm$1.3               & 80.4$\pm$0.8              & 80.0$\pm$0.7               & 78.3$\pm$1.3                                                                      \\
    Citeseer & 69.2$\pm$1.8              & 69.0$\pm$1.5             & 68.4$\pm$2.6               & 69.1$\pm$0.9              & 68.2$\pm$0.5               & 66.9$\pm$1.1                                                                      \\ \hline
\end{tabular}
}
\end{table*}

From \cref{tab:kappa}, the test performance keeps comparable from around 0.001 to at least as high as 0.75 for Cora. When $\kappa$ approximates 1, test performance reaches its highest. However, for Citeseer, the accuracy is the highest at around 0.5. This result shows that $\kappa$ selection highly varies according to datasets. It is interesting that if we plot the $\kappa$-accuracy graph, the curve drops dramatically at 0 and 1. This remains not clear why the result would change that dramatically.

\scm{Max budget.}
Max budget is the number of nodes we could pick for each epoch in line 6 of \cref{alg:sggc}. This procedure could help accelerate the selection process for large graphs such as Flickr and Arxiv. We aim to find the largest max budget which could remain an unnoticeable drop in accuracy compared to the case when the max budget is 1.

\begin{table*}[htbp!]
\centering
\caption{\label{tab:budget}GNN test performance with different max budget on Flickr and Arxiv}
\adjustbox{max width=1.0\textwidth}{%
\renewcommand*{\arraystretch}{1.2}
\begin{tabular}{llllllll}
\hline
Max Budget ($s$)       & \multicolumn{1}{c}{1} & \multicolumn{1}{c}{5} & \multicolumn{1}{c}{10} & \multicolumn{1}{c}{15} & \multicolumn{1}{c}{20} & \multicolumn{1}{c}{25} & \multicolumn{1}{c}{30} \\ \hline
Flickr & 49.5$\pm$0.2          & 49.3$\pm$0.4          & 49.3$\pm$0.3           & 49.0$\pm$0.5           & 49.1$\pm$0.4           & 48.7$\pm$0.4           & 48.4$\pm$0.8           \\
Arxiv  & 60.9$\pm$1.5          & 61.4$\pm$1.4          & 59.8$\pm$1.3           & 56.9$\pm$2.0           & 57.7$\pm$3.2           & 54.8$\pm$3.4           & 50.2$\pm$5.6           \\ \hline
\end{tabular}
}
\end{table*}

From this table, we could see that increasing the max budget does not affect test accuracy much before the budget reach as large as 10, so we could take the budget to 10 for both Flickr and Arxiv.

\scm{Standard ego graph sizes}
Recall that in \cref{sec:experiments}, our SGGC adopts the diffusion ego-graph selection strategy because it could dramatically save memory storage by cutting the ego-graph size of each node while keeping the GNN test accuracy from a noticeable decrease. To see it more clearly, we compare diffusion ego-graph size and standard average ego-graph size for all datasets.

\begin{table*}[htbp!]
\centering
\caption{\label{tab:diffVSstan}Ego-graph size for different selection strategy on datasets.}
\adjustbox{max width=1.0\textwidth}{%
\renewcommand*{\arraystretch}{1.2}
\begin{tabular}{llllll}
\hline
                         & Cora & Citeseer & Pubmed & Flickr & Arxiv  \\ \hline
Diffusion ego-graph size & 16   & 8        & 16     & 8      & 8      \\
Standard ego-graph size  & 36.8 & 15.1     & 60.1   & 875.7  & 3485.2 \\ \hline
\end{tabular}
}
\end{table*}

\subsection{More results on large graphs}

As a good continuation of \cref{tab:compare}, we conduct main coreset selection methods on large graphs, Flickr and Arxiv, to further understand the power of our SGGC.

\begin{table*}[htbp!]
\centering
\caption{\label{tab:lgfration} GNN test performance on large graph with different fraction ratio}
\adjustbox{max width=1.0\textwidth}{%
\renewcommand*{\arraystretch}{1.2}
\begin{tabular}{cllllllll}
\hline
\multicolumn{1}{l}{}     &       & Uniform      & Herding      & kCenterGreedy & Forgetting   & Glister      & GradMatch    & SGGC         \\ \hline
\multirow{2}{*}{Flickr} & 0.2\% & 45.9$\pm$2.0 & 44.5$\pm$0.9 & 46.6$\pm$1.8  & 45.7$\pm$2.1 & 47.4$\pm$0.8 & 48.3$\pm$0.5 & 46.9$\pm$1.5 \\
                         & 1\%   & 47.4$\pm$1.8 & 46.3$\pm$0.5 & 46.7$\pm$0.9  & 47.4$\pm$1.9 & 48.4$\pm$1.0 & 48.5$\pm$0.6 & 48.6$\pm$0.7 \\ \hline
\multirow{2}{*}{Arxiv}   & 0.1\% & 47.2$\pm$6.2 & 33.8$\pm$1.3 & 41.5$\pm$5.8  & 43.6$\pm$6.1 & 41.5$\pm$8.6 & 45.6$\pm$3.9 & 41.1$\pm$6.8 \\
                         & 0.5\% & 57.4$\pm$2.6 & 45.6$\pm$0.4 & 57.4$\pm$1.0  & 56.6$\pm$2.7 & 56.1$\pm$3.0 & 51.2$\pm$4.2 & 60.0$\pm$1.9 \\ \hline
\end{tabular}
}
\end{table*}
From \cref{tab:lgfration}, we could see that the performance is the highest on Flickr under a fraction rate 1\% and on Arxiv under a fraction rate 0.5\%. It is worth noting that in a large graph, when the fraction rate increases, the GNN accuracy also increases, which is different from what we have observed in small graphs such as Pubmed, Cora, and Citeseer. 

\clearpage
\newpage

\bibliography{reference}
\bibliographystyle{ims}

\end{document}


%

%

\onecolumn
\aistatstitle{Instructions for Paper Submissions to AISTATS 2023: \\
Supplementary Materials}

\section{FORMATTING INSTRUCTIONS}

To prepare a supplementary pdf file, we ask the authors to use \texttt{aistats2023.sty} as a style file and to follow the same formatting instructions as in the main paper.
The only difference is that the supplementary material must be in a \emph{single-column} format.
You can use \texttt{supplement.tex} in our starter pack as a starting point, or append the supplementary content to the main paper and split the final PDF into two separate files.

Note that reviewers are under no obligation to examine your supplementary material.

\section{MISSING PROOFS}

The supplementary materials may contain detailed proofs of the results that are missing in the main paper.

\subsection{Proof of Lemma 3}

\textit{In this section, we present the detailed proof of Lemma 3 and then [ ... ]}

\section{ADDITIONAL EXPERIMENTS}

If you have additional experimental results, you may include them in the supplementary materials.

\subsection{The Effect of Regularization Parameter}

\textit{Our algorithm depends on the regularization parameter $\lambda$. Figure 1 below illustrates the effect of this parameter on the performance of our algorithm. As we can see, [ ... ]}

\vfill